\DeclareRobustCommand\onedot{\futurelet\@let@token\@onedot}
\def\@onedot{\ifx\@let@token.\else.\null\fi\xspace}
\def\iid{{i.i.d}\onedot}
\def\eg{{e.g}\onedot} 
\def\ie{{i.e}\onedot}
\newcommand{\CHLdrop}[1]{} 
\newcommand{\lebesguecont}{smoothness}
\newcommand{\Lebesguecont}{Smoothness}
\newcommand{\lipschitzhist}{robustness}
\newcommand{\twicediff}{strong robustness}
\newcommand{\Lebesgue}{\lambda}
\newtheorem{lemma}{Lemma}
\newtheorem{theorem}{Theorem}
\newtheorem{definition}{Definition}
\newtheorem{corollary}{Corollary}
\newcommand{\loss}{\ell}
\newcommand{\nR}{\mathbb{R}}
\newcommand{\nF}{\mathcal{F}}
\newcommand{\nH}{\mathcal{H}}
\newcommand{\nI}{\mathcal{I}}
\newcommand{\nJ}{\mathcal{J}}
\newcommand{\nK}{\mathcal{K}}
\newcommand{\nL}{\mathcal{L}}
\newcommand{\nN}{\mathcal{N}}
\newcommand{\nX}{\mathcal{X}}
\newcommand{\nY}{\mathcal{Y}}
\newcommand{\nZ}{\mathcal{Z}}
\newcommand{\natnums}{\mathbb{N}}
\newcommand{\ra}{\rightarrow}
\newcommand{\PP}[1]{\mathbb{P}\left[ #1 \right] }
\newcommand{\PPc}[2]{\mathbb{P}\left[\left.#1\right| #2 \right] }
\newcommand{\EE}[1]{\mathbb{E}\left[ #1 \right] }
\newcommand{\EEnp}[1]{\mathbb{E} #1  }
\newcommand{\EEc}[2]{\mathbb{E}\left[\left.#1\right| #2 \right] }
\newcommand{\Ind}[1]{\mathbb{I}\left[ #1 \right] }
\newcommand{\norm}[1]{\left\lVert #1 \right\rVert}
\newcommand{\abs}[1]{\left| #1 \right|}
\newcommand{\bx}{\mathbf{x}}
\newcommand{\by}{\mathbf{y}}
\newcommand{\bz}{\mathbf{z}}
\newcommand{\floor}[1]{\lfloor #1 \rfloor}
\newcommand{\argmin}{\operatornamewithlimits{argmin}}
\newcommand{\risk}[2]{R( #1 , #2 )}
\newcommand{\hrisk}[2]{\hat{R}( #1 , #2 )}
\newcommand{\hnumerna}{\hat{q}}
\newcommand{\hnumer}[2]{\hnumerna( #1, #2 )}
\newcommand{\numerna}{q}
\newcommand{\numer}[2]{\numerna( #1, #2)}
\newcommand{\hdenumna}{\hat{p}}
\newcommand{\hdenum}[1]{\hdenumna(#1)}
\newcommand{\denumna}{p}
\newcommand{\denum}[1]{\denumna(#1)}
\newcommand{\kernel}[1]{\nK(#1)}
\newcommand{\mrisk}[1]{R_{mar}(h)}
\newcommand{\hmrisk}[1]{\hat{R}_{mar}(h)}
\newcommand{\PPBig}[1]{\mathbb{P}\Big[ #1 \Big] }
\newcommand{\absBig}[1]{\Big| #1 \Big|}
\newcommand{\absbig}[1]{\big| #1 \big|}
\title{Conditional Risk Minimization \\for Stochastic Processes}
\author{
Alexander Zimin \\
IST Austria\\
3400 Klosterneuburg, Austria\\
\texttt{azimin@ist.ac.at} \\
\And
Christoph H. Lampert \\
IST Austria\\
3400 Klosterneuburg, Austria\\
\texttt{chl@ist.ac.at} \\
} 
\begin{document}

\maketitle

\begin{abstract}
We study the task of learning from non-\iid data. In particular, 
we aim at learning predictors that minimize the \emph{conditional risk} 
for a stochastic process, \ie the expected loss of the predictor on the next point conditioned on the set of training samples observed so far.
For non-\iid data, the training set contains information about 
the upcoming samples, so learning with respect to the conditional 
distribution can be expected to yield better predictors than 
one obtains from the classical setting of minimizing the 
\emph{marginal risk}. 
Our main contribution is a practical estimator for the conditional 
risk based on the theory of non-parametric time-series prediction, 
and a finite sample concentration bound that establishes uniform convergence of the estimator to the true conditional risk under 
certain regularity assumptions on the process. 

\end{abstract}

\section{Introduction}
One of the cornerstone assumptions in the analysis of machine 
learning algorithms is that the training examples are independently 
and identically distributed (\iid). 
Interestingly, this assumption is hardly ever fulfilled in practice:
dependencies between examples exist even in the famous 
textbook example of classifying e-mails into \emph{ham} 
or \emph{spam}. For example, when multiple emails are 
exchanged with the same writer, the contents of later 
emails depends on the contents of earlier ones.
For this reason, there is growing interest in the development of 
algorithms that learn from dependent data and still offer generalization 
guarantees similar to the \iid situation. 
%
%

In this work, we are interested in learning algorithms for \emph{stochastic processes}, \ie data sources 
with samples arriving in a sequential manner. 
Traditionally, the generalization performance of learning algorithms 
for stochastic processes is phrased in terms of the \emph{marginal risk}: 
the expected loss for a new data point that is sampled with respect to 
the underlying marginal distribution, regardless of which samples have 
been observed before. 
In this work, we instead are interested in the \emph{conditional risk}, 
\ie the expectation over the loss taken with respect to the 
conditional distribution of the next sample given the samples
observed so far.
For \iid data both notions of risk coincide. 
For dependent data, however, they can differ drastically and 
the conditional risk is the more promising quantity for 
sequential prediction tasks.
Imagine, for example, a self-driving car. 
At any point of time it makes its next decision, e.g. determines if there is a pedestrian in front of the vehicle based on the image from camera.
A typical machine learning approach (based on the marginal risk) would use a single classifier that works well on average.
However, choosing different classifiers at each step such that they are adapted to work well in the current conditions (based on the conditional risk) is clearly beneficial in this case.



There are two main challenges when trying to learn predictors 
of low conditional risk. First, the conditional distribution 
typically changes in each step, so we are trying to learn 
a \emph{moving target}. 
Second, we cannot make use of out-of-the-box empirical risk minimization, 
since that would just lead to predictors of low marginal risk. 

Our main contributions in this work are the following:
\begin{itemize}
\item a non-parametric \textbf{empirical estimator of the conditional risk} 
with finite history, 
\item a proof of \textbf{consistency} under mild assumptions on the process
\item a \textbf{finite sample concentration bound} that, under certain technical 
assumptions, guarantees and quantifies the uniform convergence of the above 
estimator to the true conditional risk.
\end{itemize}
Our results provide the necessary tools to theoretically justify 
and practically perform empirical risk minimization with respect
to the conditional distribution of a stochastic process. 
To our knowledge, our work is the first one providing a consistent algorithm for this problem.


\section{Risk minimization} 


%

We study the problem of \emph{risk minimization}: 
having observed a sequence of examples, $S = \left( \bz_i \right)_{i=1}^N$ 
from a\CHLdrop{stationary $\beta$-mixing} stochastic process, our goal is to select a 
predictor, $h$, of minimal risk from a fixed hypothesis set $\nH$. 
The \emph{risk} is defined as the expected loss for the next observation with 
respect to a given loss function $\loss: \nH \times \nZ \ra [0,1]$.
For example, in a classification setting, one would have $\nZ=\nX\times\nY$, 
where $\nX$ are inputs and $\nY$ are class labels, and solve the task of 
identifying a predictor, $h:\nX\to\nY$, that minimizes the expected $0/1$-loss, 
$\loss(h,\bz)=\Ind{ h(\bx)\neq \by }$, for $\bz=(\bx,\by)$.

Different distribution lead to different definitions of risk.  
The simplest possibility is the \emph{marginal risk},
\begin{align}
\mrisk{h} &= \EE{\loss(h, \bz_{N+1})},
\intertext{which has two desirable properties: it is in fact independent of the actual 
value of $N$ (for the type of the processes that we consider), and under weak conditions 
on the process it can be estimated by a simple average of the losses over the training 
set, \ie the \emph{empirical risk},}
\hmrisk{h} &= \frac{1}{N}\sum_{i=1}^{N}\loss(h, \bz_i).
\end{align}

On the downside, the minimizer of the marginal risk might have low prediction performance on the actual sequence because it tries to generalize across all possible histories, while what we care about in the end is the prediction only for one observed realization.
For an \iid process this would not matter, since any future sample would be 
independent of the past observations anyway. 
For a dependent process, however, the sequence $\bz_1^N$ (where $\bz_i^j$ 
is a shorthand notation for $(\bz_i, \dots, \bz_j)$) might carry valuable 
information about the distribution of $\bz_{N+1}$, see Section~\ref{sec:experiments} 
for an example and numerical simulations.

In this work we study the \emph{conditional risk} for a finite history of length $d$, 

\begin{equation}
\risk{h}{\bar{z}} = \EEc{\loss(h, \bz_{N+1})}{\bz_{N-d+1}^N = \bar{z}}, \label{eq:conditionalrisk}
\end{equation}

for any $\bar{z} \in \nZ^{d}$. 
Our goal is to identify a predictor of minimal conditional risk in the hypothesis 
set, \ie to solve the following optimization problem
\begin{align}\label{eq:main_opt}
&\min_{h\in\nH}\ \risk{h}{\bz_{N-d+1}^N}.
\end{align}

Note that on a practical level this is a more challenging problem than 
marginal risk minimization: the conditional risk depends on the history, 
$\bz_{N-d+1}^N$, so different predictors will be optimal for different histories and time steps. 
However, this comes with the benefit that the resulting predictor is tuned for the actually observed history.

For better understanding let us consider a related problem of time series prediction that can be formulated as a conditional risk minimization. 
One can consider constant predictors and the loss should measure the distance of the prediction from the next value of the process, e.g. for a square loss this would mean minimizing $\EEc{(h-\bz_{N+1})^2}{\bz_{N-d+1}^N}$ over $h\in\nZ$.
Notice that there is a big difference from the standard time series approaches to prediction, where one minimizes a fixed (not changing with time) measure of risk over predictors that can take a finite history into account to make their predictions, which can be written as $\EE{(h(\bz_{N-d+1}^N) - \bz_{N+1})^2}$ with $h: \nZ^d \ra \nZ$.
In this way we choose a fixed function $h$ based on the data and use it henceforth.
In our case, at each step we try to find a (simpler) predictor that minimizes the risk for this particular step, meaning that we can perform optimization over less complex predictors, but we need to recompute it at every step.

\section{Related work} 

While statistical learning theory was first formulated 
for the \iid setting~\citep{vapnik1971uniform}, it 
was soon recognized that extension to non-\iid situations, 
in particular many classes of stochastic processes, were 
possible and useful. 
As in the \iid case, the core of such results is typically
formed by a combination of a \emph{capacity bound} on the 
class of considered predictors and a \emph{law of large numbers} 
argument, that ensures that the empirical average of function 
values converges to a desired expected value.
Combining both, one obtains, for example, that empirical 
risk minimization (ERM) is a successful learning 
strategy.	

Most existing results study the situation of stationary 
stochastic processes, for which the definition of a 
\emph{marginal risk} makes sense. The consistency of 
ERM or similar principles can then be established under 
certain conditions on the dependence structure, for example
for processes that 
are $\alpha$-, $\beta$- or $\phi$-mixing~\citep{YuBin01,karandikar2002rates,steinwart2009fast,zou2009generalization}, 
exchangeable, or \emph{conditionally \iid}~\citep{Berti01,Pestov2010}. 

Asymptotic or distribution dependent results were furthermore obtained even 
for processes that are just ergodic~\citep{adams2010uniform},
and Markov chains with countably infinite state-space~\citep{gamarnik2003extension}.

All of the above works aim to study the minimizer of a long-term 
or marginal risk. 
Actually minimizing the conditional risk has not received much 
attention in the literature, even though some conditional notions 
of risk were noticed and discussed. 
The most popular objective is the conditional risk based on the full history, that is $\EEc{\loss(h, \bz_{N+1})}{\bz_1^N}$.
For example, \cite{Pestov2010} and \cite{Shalizi13} argue in 
favor of minimizing this conditional risk, but focus on exchangeable 
processes, for which the unweighted average over the training samples 
can be used for this purpose. 

The following two papers focus on the variants of the conditioning, but consider the situations where the objective is close to the marginal risk.
\cite{Kuznetsov01} look at the minimization of $\EEc{\loss(h, \bz_{N+s})}{\bz_1^N}$ with integer gap $s$ for the non-stationary 
processes, but the convergence of their bound requires $s$ growing 
as the amount of data grows.
\cite{Mohri02fixed} discuss the conditional
risk based on the full history in the context of generalization guarantees for stable 
algorithms. Their proofs require an assumption that one can freely remove points from the conditioning set without changing the 
distribution.\footnote{Apparently, 
the need for this assumption was realized only after publication of 
the JMLR paper of the same title. Our discussion is based on 
the PDF version of the manuscript from the author homepage, 
dated 10/10/13.}
In our notation this assumption means $\EEc{\loss(h, \bz_{N+1})}{\bz_1^N} = \EEc{\loss(h, \bz_{N+1})}{\bz_1^{N-s}}$ for integer $s$'s.
This again allows the conditional risk to be approximated by the marginal one by separating the point in the loss from the history by an arbitrarily large gap.
In both cases, this makes the problem much easier for mixing processes, since for big values of $s$, $\bz_{N+s}$ is almost independent of $\bz_1^N$.
In contrast to these two works, in our setting the conditional risk is indeed different from the marginal one (see Figure~\ref{fig:distributions} for an example).

\cite{Agarwal01} extend online-to-batch conversion to mixing processes. The authors construct an estimator for the marginal risk, and then show that it can also be used for an average over $m$ future conditional risks. In our notation this corresponds to $\frac{1}{m}\sum_{i=1}^{m}\EEc{\loss(h, \bz_{N+i})}{\bz_1^N}$. Their results are based on the idea of separating the point in the loss and the history by a large enough gap. Similarly to the above papers, the convergence only holds for  $m\ra\infty$, where the average conditional risk converges to the marginal one, while our setting corresponds the case without any gap, $m=1$, with conditioning on a finite history.

\cite{wintenberger2014optimal} introduces a novel online-to-batch conversion technique to show bounds in the regret framework for the cumulative conditional risk, defined as a sum of conditional risks, $\sum_{i=1}^{N}\EEc{\loss(h, \bz_{i+1})}{\bz_1^i}$. Our setting is a harder version of this problem, when we need to minimize each summand separately, not only the whole sum.

The work of \cite{kuznetsov2015learning} is the most related to ours.
They aim to minimize the conditional risk based on the full history by using a weighted empirical average, in the spirit of our estimator.
They provide a generalization bound for fixed, non-random weights and, based on their results, they derive a heuristic procedure for finding the weights without guarantees of convergence.
The main difference of our work is that we provide a data-dependent way to choose weights with the proof of the convergence.

We are not aware of any work that provides a convergent algorithm for the conditional risk based on the full or finite history.
In our work we focus on the finite history and in Section \ref{sec:discussion} we discuss the relation between the two notions.

On a technical level, our work is related to the task of 
\textit{one-step ahead prediction} of time series~\citep{Modha01,modha1998memory,Meir01,alquier2012model}.
The goal of these methods is to reason about the next step of a 
process, though not in order to choose a hypothesis of minimal 
risk, but to predict the value for the next observation itself. 
Our work on empirical conditional risk is inspired by this 
school of thought, in particular on \emph{kernel-based 
nonparametric sequential prediction}~\citep{Biau01}.

\section{Results}\label{sec:proofs}
In this section we present our results together with the assumptions needed for the proofs.

When we want to perform the optimization (\ref{eq:main_opt}) in practice, we do not have access to the conditional distribution of $\bz_{N+1}$. 
Thus, we take the standard route and aim at minimizing an empirical estimator, 
$\hat R$, of the conditional risk.
\begin{align}
&h_S = \argmin_{h\in\nH} \hrisk{h}{\bz_{N-d+1}^N}.
\end{align}
Our first contribution is the definition of a suitable conditional risk estimator when the process takes values in $\nZ \subseteq \nR^k$. 
%
The estimator is based on the notion of a smoothing kernel\footnote{Here and later in this section, as well as in Section~\ref{sec:proofs}, 
we use \emph{kernel} only in the sense of kernel-based non-parametric density estimation, not in the sense of positive definite 
kernel functions from kernel methods.}.


\begin{definition}
A function $\nK: \nR^{kd} \ra \nR_+$ is called a \emph{smoothing kernel} if it satisfies
\begin{enumerate}
\item$ \int_{\nR^{kd}} \nK(\bar{z}) d\bar{z} = 1$,
\item $\nK$ is bounded by $K_1$,
\item $\forall i,j=1,\dots,kd: \int_{\nR^{kd}} \bar{z}_i\kernel{\bar{z}}d\bar{z} = 0,
\quad\text{ and }\quad \int_{\nR^{kd}} \bar{z}_i \bar{z}_j\kernel{\bar{z}}d\bar{z} \leq K_2$,
\item $\nK \text{ is Lipschitz continuous of order } \gamma \text{ with Lipschitz constant } L.$
\end{enumerate}
\end{definition}

A typical example is a squared exponential kernel, $e^{-\gamma\norm{\bar{z}}^2}$, but many other choices are possible.
We can now define our estimator.

\begin{definition}
For a smoothing kernel function, $\nK$, 
and a bandwidth, $b > 0$, we define the empirical conditional risk estimator 
\begin{align}
&\hrisk{h}{\bar{z}} = \frac{\hnumer{h}{\bar{z}}}{\hdenum{\bar{z}}},
\intertext{for}
&\hnumer{h}{\bar{z}} = \frac{1}{nb^{d}}\sum_{i\in \nI}\loss(h, \bz_{i+1})\nK(\,(\bar{z}-\bz_{i-d+1}^i)/b\,)
\intertext{and}
&\hdenum{\bar{z}} = \frac{1}{nb^{d}}\sum_{i\in \nI}\nK(\,(\bar{z}-\bz_{i-d+1}^i)/b\,),
\end{align}
where $\nI = \left\lbrace d, d+1, \dots, N-1 \right\rbrace $ is the index set of samples used
and $n = \abs{\nI}$.
\end{definition}

In words, the estimator is a weighted average loss over the training set, where the weight 
of each sample, $\bz_{i+1}$, is proportional to how similar its history, $\bz_{i-d+1}^i$, is to 
the target history, $\bar z$. 
Similar kernel-based non-parametric estimators have been used successfully in time series 
prediction~\citep{Gyorfi01}. 
Note, however, that risk estimation might be an easier task than that, especially 
for processes of complex objects, since we do not have to predict the actual values 
of $\bz_{N+1}$, but only the loss it causes for a hypothesis $h$. 
In the self-driving car example, compare the pixel-wise prediction of the next image to the prediction of the loss that our classifier causes.

Our main result in this work is the proof that minimizing the above empirical 
conditional risk is a successful learning strategy for finding a minimizer
of the conditional risk.
The following well-known result~\citep{vapnik1998statistical} shows that it  
suffices to focus on uniform deviations of the estimator from the actual risk.
\begin{lemma}
\begin{align}
\risk{h_S}{\bz_{N-d+1}^N} - \inf_{h\in\nH}\risk{h}{\bz_{N-d+1}^N}\ \  \leq \ \ 2\sup_{h\in\nH}\abs{\risk{h}{\bz_{N-d+1}^N} - \hrisk{h}{\bz_{N-d+1}^N}}.
\end{align}
\end{lemma}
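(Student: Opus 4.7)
The plan is to use the standard add-and-subtract decomposition that is classical for bounding the excess risk of an empirical risk minimizer. Let me abbreviate $\bar z = \bz_{N-d+1}^N$ for clarity. By the definition of $h_S$, we have $\hat R(h_S, \bar z) \leq \hat R(h, \bar z)$ for every $h \in \mathcal{H}$.

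First I would pick, for any $\varepsilon > 0$, a near-minimizer $h_\varepsilon \in \mathcal{H}$ satisfying $R(h_\varepsilon, \bar z) \leq \inf_{h\in\mathcal{H}} R(h, \bar z) + \varepsilon$ (using an infimum argument since a true minimizer may not exist). Then I would write
\begin{align*}
R(h_S, \bar z) - R(h_\varepsilon, \bar z) &= \big(R(h_S, \bar z) - \hat R(h_S, \bar z)\big) \\
&\quad + \big(\hat R(h_S, \bar z) - \hat R(h_\varepsilon, \bar z)\big) \\
&\quad + \big(\hat R(h_\varepsilon, \bar z) - R(h_\varepsilon, \bar z)\big).
\end{align*}
The middle term is non-positive by the defining property of $h_S$, and each of the remaining two terms is bounded in absolute value by $\sup_{h\in\mathcal{H}}|R(h, \bar z) - \hat R(h, \bar z)|$. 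This yields
\begin{align*}
R(h_S, \bar z) - \inf_{h\in\mathcal{H}} R(h, \bar z) \leq 2\sup_{h\in\mathcal{H}}\abs{R(h, \bar z) - \hat R(h, \bar z)} + \varepsilon.
\end{align*}
Letting $\varepsilon \to 0$ finishes the argument.

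There is no real obstacle here; this is a purely algebraic manipulation that does not depend on how $R$ or $\hat R$ are defined or on any probabilistic properties of the process. The only mild subtlety is handling the case where the infimum over $\mathcal{H}$ is not attained, which is exactly why I would introduce the $\varepsilon$-minimizer and then pass to the limit. The real work of the paper, namely controlling the supremum on the right-hand side, is deferred to subsequent results.
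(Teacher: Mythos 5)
Your proof is correct and is exactly the standard excess-risk decomposition that the paper relies on (it cites Vapnik (1998) and gives no proof of its own), including the appropriate $\varepsilon$-minimizer argument for the case where the infimum is not attained.
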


As our first result we show the convergence of such uniform deviations to zero.
But before we can make the formal statement, we need to introduce the technical assumptions and a few definitions.

We assume that we observe data from a stationary $\beta$-mixing stochastic process 
$\left\lbrace \bz_i \right\rbrace_{i=1}^\infty$ taking values 
in $\nZ = [0, 1]^k$.
\emph{Stationarity} means that for all $m \geq 1$ the vector $\bz_{1}^{m}$ has the 
same distribution as $\bz_{s+1}^{s+m}$ for all $s \geq 0$.
In order to quantify the dependence between the past and the future of the 
process, we consider mixing coefficients.

\begin{definition}
Let $\sigma(\bz_i^j)$ be a sigma algebra generated by $\bz_i^j$. 
Then the $j$-th $\beta$-mixing coefficient is 
\begin{align}
\beta(j) &= \sup_{t\in\natnums} \EEnp{ \sup_{ A\in\sigma(\bz_{t+j}^\infty)} \abs{\PPc{A}{\bz_{1}^t} - \PP{A}} }.
\end{align}
\end{definition}

A process is called $\beta$-mixing if $\beta(j) \ra 0$ as $j\ra\infty$.
We call a process an exponentially $\beta$-mixing if $\beta(j) \leq c_1e^{-c_2j}$ for some $c_1, c_2 > 0$.
On a high level, a process is mixing if the head of the 
process, $\bz_{1}^t$ and the tail of the process, $\bz_{t+j}^\infty$, 
become as close to independent from each other as wanted when 
they are separated by a large enough gap. 

Many classical stochastic processes are $\beta$-mixing, 
see~\citep{Bradley01} for a detailed survey. 
For example, many finite-state Markov and hidden Markov models 
as well as autoregressive moving average (ARMA) processes 
fulfill $\beta(j)\to 0$ at an exponential rate~\citep{athreya1986mixing},
while certain diffusion processes are $\beta$-mixing at least with 
polynomial rates~\citep{chen2010nonlinearity}. 
Clearly, \iid processes are $\beta$-mixing with $\beta(j)=0$ for all $j$. 

To control the complexity of the hypothesis space we use covering numbers.

\begin{definition}
A set, $V$, of $\nR$-valued functions is a \textbf{$\theta$-cover} 
(with respect to the $\ell_p$-norm) of $\nF \subset \left\lbrace f: \nX \ra \nR \right\rbrace$ on a sample $x_1, \dots, x_n$ if
\begin{equation}
\forall f \in \nF\ \ \exists v\in V 
\quad \Big( \frac{1}{n} \sum_{i=1}^{n} \abs{ f(x_i) - v(x_i) }^p \Big)^{1/p} \leq \theta.
\end{equation}
The \textbf{$\theta$-covering number} of a function class $\nF$ on a given sample $x_1, \dots, x_n$ is
\begin{equation}\nN_p(\theta, \nF, x_1^n ) = \min \left\lbrace \abs{V}: V 
\text{ is an $\theta$-cover w.r.t. $\ell_p$-norm of $\nF$ on $x_1^n$} \right\rbrace.
\end{equation}
The \textbf{maximal $\theta$-covering number} of a function class $\nF$ is
\begin{equation}
\nN_p(\theta, \nF, n ) = \sup_{x_1^n\in\nX^n}\nN_p(\theta, \nF, x_1^n ).
\end{equation}
\end{definition}

\begin{theorem}\label{theorem:consistency}
Assume that
\begin{itemize}
\item there exist $D_0, D_1 > 0$ such that $D_0\Lebesgue(B) \leq \PP{ \bz_{i-d+1}^{i} \in B } \leq D_1 \Lebesgue(B) $ for $B \in \mathcal{B}(\nZ^d)$, where $\Lebesgue$ is the Lebesgue measure on $\nR^{kd}$
\item for every hypothesis $h \in \nH$, the conditional risk $R(h, \bar{z})$ is $L_R$-Lipschitz continuous in $\bar{z}$
\end{itemize}
Then
\begin{equation}
\sup_{h\in\nH}\abs{\risk{h}{\bz_{N-d+1}^N} - \hrisk{h}{\bz_{N-d+1}^N}} \ra 0 \text{ in probability}
\end{equation}
if $b \ra 0 $ as $n \ra \infty$ slowly enough (depending on the covering number of the hypothesis set and the mixing rate of the process). 
The same statement holds almost surely for an exponentially $\beta$-mixing process.
\end{theorem}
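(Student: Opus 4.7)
The plan is to reduce the claim to concentration of empirical kernel-weighted sums through a single algebraic trick: write $p$ for the density of $\bz_{i-d+1}^i$ (which exists and satisfies $D_0 \le p \le D_1$ by stationarity together with the first assumption), and decompose
\begin{equation}
\hrisk{h}{\bar z} - \risk{h}{\bar z} \;=\; \frac{1}{\hdenum{\bar z}}\Big[\hnumer{h}{\bar z} - \risk{h}{\bar z}\,\hdenum{\bar z}\Big].
\end{equation}
It then suffices to prove, uniformly in $h\in\nH$ and $\bar z\in\nZ^d$, that $\hdenum{\bar z}$ stays bounded away from zero and that the centered quantity in brackets goes to zero. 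This centered form is essential: it never forces us to estimate $p$ pointwise, which would require a smoothness assumption on $p$ that the theorem does not make.

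\textbf{Bias analysis.} Stationarity and the change of variables $u=(\bar z-z)/b$ give $\EE{\hdenum{\bar z}}=\int p(\bar z-bu)\kernel{u}\,du \in [D_0,D_1]$, so once $\hdenum{\bar z}$ concentrates around its mean it is at least $D_0/2$ with high probability. Applying the same manipulation to the bracket yields
\begin{equation}
\EE{\hnumer{h}{\bar z} - R(h,\bar z)\hdenum{\bar z}} \;=\; \int \big[R(h,\bar z-bu) - R(h,\bar z)\big]\,p(\bar z-bu)\,\kernel{u}\,du,
\end{equation}
which by the Lipschitz assumption on $R(h,\cdot)$ and the second-moment bound on $\nK$ is at most $L_R D_1 \big(\!\int\!\norm{u}\kernel{u}\,du\big)\, b = \bigO{b}$, uniformly in $h$ and $\bar z$, and vanishes as $b\to 0$.

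\textbf{Variance, uniformity, and the main obstacle.} The delicate part is concentration of $\hdenum{\bar z}$ and of the centered numerator around their means, uniformly over $h$ and $\bar z$, for a $\beta$-mixing process. I would apply the standard Bernstein blocking device: partition $\nI$ into alternating blocks of length $a$, couple the odd (resp.\ even) block sums to independent copies at total-variation cost $\lfloor n/(2a)\rfloor\beta(a)$, and apply Bernstein's inequality to the resulting independent blocks. A single kernel-weighted summand is bounded by $K_1/b^d$ with variance $\bigO{1/b^d}$, so the deviation probability for a fixed pair $(h,\bar z)$ scales like $\exp(-c\, n b^d\varepsilon^2/a) + (n/a)\beta(a)$. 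To pass from fixed to uniform $(h,\bar z)$, I would discretize via an $\ell_\infty$-cover of the kernel-reweighted loss class, exploiting the Lipschitzness of $\nK$ (to control the $\bar z$-direction) and of the loss class (to control the $h$-direction), then take a union bound, paying a $\log\nN_\infty$ factor.

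The main obstacle is the simultaneous balancing act this imposes: we need $nb^d/a\to\infty$ (for block concentration), $(n/a)\beta(a)\to 0$ (for the coupling error), $b\to 0$ (for the bias), and the $\log\nN_\infty$ term to be dominated by $nb^d/a$ after the union bound. This is precisely what the theorem encodes as ``$b\to 0$ slowly enough (depending on the covering number and the mixing rate).'' For an exponentially $\beta$-mixing process we can take $a$ of order $\log n$, which still makes $(n/a)\beta(a)$ summable and the deviation bound summable in $n$; Borel--Cantelli then upgrades convergence in probability to almost sure convergence.
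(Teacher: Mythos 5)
Your proposal follows essentially the same route as the paper's proof: a numerator/denominator decomposition whose bias term is $\bigO{b}$ by the Lipschitzness of $R(h,\cdot)$ and the kernel moment bounds, uniform concentration of $\hat{q}$ and $\hat{p}$ via Yu-type coupling to independent blocks plus an exponential inequality, covers over $\bar z$ (using the Lipschitzness of $\nK$, as in the paper's Lemma~\ref{lemma:densitycovering}) and over the hypothesis class, the stated balancing of $b$, block length and mixing rate, and Borel--Cantelli for the exponentially mixing case; your only structural deviation, dividing by the random $\hdenum{\bar z}$ (lower-bounded with high probability) instead of by $\EE{\hdenum{\bar z}}\ge D_0$ as the paper does, is immaterial. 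The one place the paper is more careful is the blocking step: since each summand depends on $d+1$ overlapping consecutive variables, the odd/even block sums are not functions of gap-separated variable blocks, so the coupling cost is not literally $\lfloor n/(2a)\rfloor\beta(a)$; the paper's Lemma~\ref{lemma:concentration} first splits the index set into $\nI_1,\nI_2$ so that no summand straddles a block boundary, and in your version you would need either that trimming or an effective gap of $a-d-1$, which for fixed $d$ leaves the conclusion unchanged.
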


We will refer to the first assumption of Theorem \ref{theorem:consistency} as \emph{\lebesguecont} and to the second one as \emph{\lipschitzhist}.
The need for these assumptions is due to the fact that we use nonparametric estimates for the conditional risk, because as it is shown in \citep{Gyorfi06} the ergodicity (which is implied by mixing) itself is not enough to show even $L_1$-consistency of kernel density estimators.
Because of this, additional assumption are required.
\Lebesguecont\ means that the marginal distribution of the process and the Lebesgue measure are mutually absolute continuous. This assumption, for example, satisfied for processes with a density, which is bounded from below away from 0. \cite{caires2005non} argue that it implies some kind of recurrence of the process, that is that for almost every point in the support, the process visits its neighborhood infinitely often.
The usage of local averaging estimation implicitly assumes the continuity of the underlying function, that is the \lipschitzhist\ assumption. The proof would work with weaker, but more technical assumption, however, we stick to this, more natural one.

As a second result we establish the convergence rate of the estimator.

\begin{theorem}\label{theorem:main}
Assume that
\begin{itemize}
\item there exist $D_0, D_1 > 0$ such that $D_0\Lebesgue(B) \leq \PP{ \bz_{i-d+1}^{i} \in B } \leq D_1 \Lebesgue(B) $ for $B \in \mathcal{B}(\nZ^d)$, where $\Lebesgue$ is a Lebesgue measure on $\nR^{kd}$
\item the random vector $\bz_{i-d+1}^i$ has a density $\denumna$. Also, $\numer{h}{\bar{z}} = \risk{h}{\bar{z}}\denum{\bar{z}}$ and $\denumna(\bar{z})$ are both twice continuously differentiable in $\bar{z}$ with second derivatives bounded by $D_2$
\item loss function $\loss$ is $L_{\nH}$-Lipschitz in the first argument
\end{itemize}
Then the following holds for $t_1 = \frac{1}{6} (tD_0 - K_2D_2d^2b^2)$, $t_2 = t_1b^d/(64K_1L_{\nH})$, $t_3 = \big( \frac{3L}{b^{d+\gamma}t_1} \big)^\frac{1}{\gamma}$ and any $\mu, a > 0$ such that $4\mu a d = N$:
\begin{align}
\PPBig{\sup_{h,\bar{z}}\abs{\hrisk{h}{\bar{z}} - \risk{h}{\bar{z}}} > t} 
& \leq \,   32 \left( \frac{\sqrt{kd}t_3}{2} \right)^{kd}  \nN_1(t_2, \nH, n) e^{- \mu t_1^2 b^{2d} / (2048K_1^2)} \\
& + 4 \left( \frac{\sqrt{kd}t_3}{2} \right)^{kd}(\mu-1)\beta(2ad).
\end{align}
\end{theorem}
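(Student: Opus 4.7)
The plan is to follow a standard Nadaraya–Watson–style analysis adapted to $\beta$-mixing processes. The first step is the ratio identity
\[
\hrisk{h}{\bar z}-\risk{h}{\bar z}
=\frac{\big(\hnumer{h}{\bar z}-\numer{h}{\bar z}\big)+\risk{h}{\bar z}\big(\denum{\bar z}-\hdenum{\bar z}\big)}{\hdenum{\bar z}},
\]
combined with $\risk{h}{\bar z}\in[0,1]$ and the lower bound $\denum{\bar z}\ge D_0$ that follows from the \emph{\lebesguecont} hypothesis. Controlling $\hdenum{\bar z}$ from below with high probability, I would reduce the event $\{\sup_{h,\bar z}|\hrisk{h}{\bar z}-\risk{h}{\bar z}|>t\}$ to the union of two events: one on $\sup_{\bar z}|\hdenum{\bar z}-\denum{\bar z}|$ and one on $\sup_{h,\bar z}|\hnumer{h}{\bar z}-\numer{h}{\bar z}|$, each exceeding a small multiple of $tD_0$. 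This is what eventually produces the quantity $t_1\propto tD_0$ appearing in the exponent.

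For each of these two suprema I would do a bias–variance split. The bias comes from a second-order Taylor expansion of $\numer{h}{\cdot}$ and $\denum{\cdot}$ at $\bar z$: the constant part matches $\int\nK = 1$, the first-order terms vanish because $\int \bar z_i\nK(\bar z)d\bar z=0$, and the second-order terms are bounded using $\int \bar z_i\bar z_j\nK(\bar z)d\bar z\le K_2$ and the $D_2$-bound on the second derivatives. This produces exactly the deterministic correction $K_2D_2d^2b^2$ that is subtracted inside $t_1$. For the variance I would use Yu's blocking technique: split $\nI=\{d,\dots,N-1\}$ into $2\mu$ alternating blocks of size $a$ (so that $4\mu a d = N$ leaves enough raw-sample separation between odd blocks), apply Berbee's coupling lemma to replace the odd blocks by independent copies at total cost $(\mu-1)\beta(2ad)$, and then apply Hoeffding on the $\mu$ coupled block sums. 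Each kernel term in $\hnumer$ or $\hdenum$ is bounded by $K_1/(nb^d)$, so a block sum is bounded by $aK_1/(nb^d)=K_1/(2\mu b^d)$, giving an exponential bound of order $\exp(-c\mu t_1^2 b^{2d}/K_1^2)$.

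The third step is the passage from pointwise to uniform concentration via two covers. For $\nH$, I would use the $L_\nH$-Lipschitzness of $\loss$: if $h'$ is $t_2$-close to $h$ in the $\ell_1$-sense on $\{\bz_{d+1},\ldots,\bz_N\}$, then $|\hnumer{h}{\bar z}-\hnumer{h'}{\bar z}|\le L_\nH K_1 t_2/b^d$, so taking $t_2=t_1 b^d/(64K_1L_\nH)$ absorbs this discretisation error into $t_1$ and introduces the factor $\nN_1(t_2,\nH,n)$. For $\bar z\in[0,1]^{kd}$, using the $\gamma$-Lipschitzness of $\nK$, moving $\bar z$ by $\epsilon$ changes the estimator by at most $L\epsilon^\gamma/b^{d+\gamma}$, so requiring this to be at most $t_1/3$ fixes $\epsilon=1/t_3$ and yields the $(\sqrt{kd}t_3/2)^{kd}$-size grid covering $[0,1]^{kd}$. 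A union bound over both covers and over two-sided deviations for the numerator event and the denominator event (with both its bias and variance parts) accounts for the overall prefactor $32$, while $4$ suffices for the cover-of-$\bar z$-only mixing penalty, which does not depend on $\nH$.

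The main technical obstacle, in my view, is not Berbee's lemma itself but the fact that consecutive summands of $\hnumer$ and $\hdenum$ share the overlapping window $\bz_{i-d+1}^{i+1}$ of length $d+1$. Consequently, the mixing coefficient that enters the blocking argument must be evaluated at the \emph{raw} index separation between windows, not at the window index separation, and the block sizes must be chosen so that odd-indexed windows are separated by at least $2ad$ raw observations — this is exactly the reason for both the appearance of $\beta(2ad)$ and the scheduling constraint $4\mu a d = N$. Once these are set correctly so that Berbee applies to the $\sigma$-algebras generated by the windows (rather than the individual $\bz_i$), the remaining calculation is essentially bookkeeping of constants.
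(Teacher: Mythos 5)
Your proposal follows essentially the same architecture as the paper's proof: a ratio decomposition reducing the problem to concentration of $\hnumerna$ and $\hdenumna$, a Taylor-expansion bias bound (using the kernel moment conditions and the $D_2$ bound) that produces the $K_2D_2d^2b^2$ correction inside $t_1$, the independent-block technique applied at raw separation $2ad$ --- and you correctly identify the overlapping-window issue, which the paper handles by first splitting the index set $\nI$ according to the parity of $\floor{i/d}$ before blocking, so that no summand straddles two raw blocks --- and finally two covers, one of $\nH$ yielding $\nN_1(t_2,\nH,n)$ via $K_1$ and $L_\nH$, and one of the hypercube of radius $1/t_3$ via the order-$\gamma$ Lipschitz property of $\nK$, bounded by $\left(\sqrt{kd}\,t_3/2\right)^{kd}$.

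Two points in your sketch deviate from the paper and need repair. First, your ratio identity places the random $\hdenum{\bar z}$ in the denominator, so you must additionally establish a uniform high-probability lower bound on $\hdenum{\bar z}$; the paper's Lemma~\ref{lemma:firstargument} instead uses a three-term decomposition whose denominator is the deterministic $\EE{\hdenum{\bar z}}\ge D_0$, with $\hrisk{h}{\bar z}$ and $\risk{h}{\bar z}$ bounded by $1$ in the numerators, which avoids that extra step entirely and is what produces the stated constant $t_1=\frac{1}{6}(tD_0-K_2D_2d^2b^2)$ after the factor-$3$ loss in the $\bar z$-covering step (Lemma~\ref{lemma:densitycovering}); with your version the constants would come out differently and would have to be reworked to match the statement. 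Second, the factor $\nN_1(t_2,\nH,n)$ cannot be obtained, as you describe, by fixing an empirical $\ell_1$-cover of $\nH$, applying Hoeffding to each cover element on the coupled block sums, and union bounding: the $\ell_1$-cover is sample-dependent, so this argument is circular. The paper instead invokes the standard symmetrization-based uniform law of large numbers for the $\mu$ independent blocks (Theorem 9.1 of Gy\"orfi et al.), which is where the constants $32$, $64$ (hence $t_2$) and $2048$ actually originate, and then relates the covering number of the composite block-level class to $\nN_1(\cdot,\nH,n)$ via the bound $K_1$ on the kernel and the $L_\nH$-Lipschitzness of the loss, as in Corollary~\ref{cor:concentration}. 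With these two repairs your argument coincides with the paper's proof.
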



%

The first assumption in Theorem \ref{theorem:main} is the \lebesguecont\ condition of Theorem \ref{theorem:consistency}.
The second one is a stricter version of \lipschitzhist\ needed to quantify the convergence rate, we will refer to it as \emph{\twicediff}.
The last assumption is a standard way to relate the covering numbers of the induced space $\left\lbrace  \loss(h, \cdot): h \in \nH \right\rbrace $ to $\nH$ (such losses are sometimes called admissible).

As an example, let us consider a case of an exponentially $\beta$-mixing process, e.g. when $\beta(j) \leq e^{-j}$.
For $\nH$ with a finite fat-shattering dimension, $\nN_1(t_2, \nH, n) = poly(n) = poly(N)$, and if we choose $\mu \approx N^{2/3}$, $ 2ad \approx N^{1/3}$ and $b = N^{-\frac{1}{6d}}$, then the bound of Theorem \ref{theorem:main} is approximately $poly(N)(e^{-c_1N^{1/3}} + e^{-c_2N^{1/3}})$ for a fixed $t$.

Before we present the proofs, we introduce some auxiliary results.

\begin{definition}
For a sequence of random variables $\bz_1^n$ and integers $\mu, a$, another random sequence $\tilde{\bz}_1^n$ is called a \textbf{($\mu$,$a$)-independent block copy} of $\bz_1^n$ if $2\mu a = n$ and the blocks $\tilde{\bz}_{ka+1}^{ka+a}$ for $k=0, \dots, 2\mu-1$ are independent and have the same marginal distributions as the corresponding blocks in $\bz_1^n$.
\end{definition}

\begin{lemma}[\cite{YuBin01}, Corollary 2.7]\label{app:beta-mixing}
For a $\beta$-mixing sequence of random variables $\bz_1^n$, let $\tilde{\bz}_1^N$ be its ($\mu$,$a$)-independent block copy.
Then for any measurable function $g$ defined on every second block of $\bz_1^N$ of length $a$ and bounded by $B$, it holds
\begin{equation}
\abs{\EE{g(\bz_1^N)} - \EE{g(\tilde{\bz}_1^N)} } \leq B(\mu-1)\beta(a).
\end{equation}
\end{lemma}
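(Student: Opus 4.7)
The plan is to reduce the expectation gap to a total variation distance between joint and product-of-marginals laws of the relevant blocks, and then to bound that TV distance by induction on the number of blocks.

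Observe first that $g$ is measurable with respect to $\mu$ blocks of length $a$ (every second block), which in time order I denote $B_1, \ldots, B_\mu$; consecutive blocks among these are separated by an intermediate unused block of length $a$. Let $P$ be the joint law of $(B_1, \ldots, B_\mu)$ under the original $\beta$-mixing process and $Q = \bigotimes_{i=1}^\mu \mathcal{L}(B_i)$ the product of marginals, which by construction is the joint law under $\tilde{\bz}$. Since $g$ is bounded by $B$, the elementary inequality $|\EE{g(\bz_1^N)} - \EE{g(\tilde{\bz}_1^N)}| \leq B \sup_A |P(A) - Q(A)|$ reduces the claim to showing $\sup_A |P(A) - Q(A)| \leq (\mu-1)\beta(a)$.

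For the TV bound, I would induct on $\mu$. The base case $\mu = 2$ is precisely the definition of $\beta(a)$: by stationarity and the fact that $B_1, B_2$ are separated by a gap of size $a$, the TV distance between their joint law and the product of marginals is at most $\beta(a)$. For the inductive step, write $X = (B_1, \ldots, B_{\mu-1})$ and $Y = B_\mu$; since $X$ and $Y$ are separated by the intervening block of length $a$, one has $\mathrm{TV}(\mathcal{L}(X, Y), \mathcal{L}(X) \otimes \mathcal{L}(Y)) \leq \beta(a)$. Using that TV is invariant under tensoring with a common factor, together with the inductive hypothesis applied to $X$, gives $\mathrm{TV}(\mathcal{L}(X) \otimes \mathcal{L}(Y), \bigotimes_i \mathcal{L}(B_i)) \leq (\mu - 2) \beta(a)$. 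The triangle inequality closes the induction at $(\mu-1)\beta(a)$.

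The main obstacle will be carefully extracting the TV bound $\mathrm{TV}(\mathcal{L}(X, Y), \mathcal{L}(X) \otimes \mathcal{L}(Y)) \leq \beta(a)$ from the definition used in the paper, which is stated as an expected supremum of $|\PPc{A}{\bz_1^t} - \PP{A}|$ over future events $A$. One needs to confirm that this conditional-versus-marginal formulation integrates up to a TV bound on the joint distribution, and that the supremum over all future events dominates the supremum over the smaller class of events depending only on $Y$ that appears in the joint-versus-product TV. Both points are routine measure-theoretic manipulations but require an explicit identification of the sigma-algebras $\sigma(\bz_1^t)$ and $\sigma(\bz_{t+a}^\infty)$ containing $X$ and $Y$ respectively.
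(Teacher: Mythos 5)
Your argument is correct, but there is nothing in the paper to compare it against: the paper does not prove this lemma, it imports it verbatim as Corollary~2.7 of Yu (1994). What you have reconstructed is essentially the standard blocking argument behind that result (going back to Eberlein and Yu's Lemma~4.1): reduce the difference of expectations to the total-variation distance between the joint law of the $\mu$ retained blocks and the product of their marginals, then control that distance by $(\mu-1)\beta(a)$ by peeling off one block at a time across a gap of length $a$ --- your induction is just the telescoping version of Yu's successive-conditioning step. The two points you flag are indeed the only delicate ones and both go through: the paper's definition $\beta(j)=\sup_t\EE{\sup_{A\in\sigma(\bz_{t+j}^\infty)}\abs{\PPc{A}{\bz_1^t}-\PP{A}}}$ dominates the joint-versus-product TV distance for the pair (past blocks, next block) by the usual sectioning argument $\abs{\PP{(X,Y)\in C}-(\mathcal{L}(X)\otimes\mathcal{L}(Y))(C)}\leq\EE{\sup_{A}\abs{\PPc{A}{\bz_1^t}-\PP{A}}}$, coarsening the conditioning from $\bz_1^t$ to the retained blocks only decreases the coefficient, and monotonicity of $\beta$ absorbs the off-by-one in the gap convention. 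One small caveat: the inequality $\abs{\EE{g(\bz_1^N)}-\EE{g(\tilde{\bz}_1^N)}}\leq B\sup_A\abs{P(A)-Q(A)}$ requires the range of $g$ to lie in an interval of length $B$ (e.g.\ $g\in[0,B]$); for a merely two-sided bound $\abs{g}\leq B$ you would pick up a factor of $2$. This is harmless here, since in the paper's only use of the lemma (inside Lemma~\ref{lemma:concentration}) $g$ is an indicator function, but it is worth stating the convention explicitly.
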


Note that an application of this lemma and the proof of Lemma \ref{lemma:concentration} does not require $2\mu a = n$ to hold exactly. It is possible to work with $\mu$ and $a$ such that $2\mu a < n$ by putting all the remaining points into the last block. However, for the convenience of notations we will write equality.

The proof of Theorems \ref{theorem:consistency} and \ref{theorem:main} relies on Lemma~\ref{lemma:concentration}, a concentration inequality 
that bounds the uniform deviations of functions on blocks of a $\beta$-mixing stochastic process.
This lemma uses a popular independent block technique.
However, it is not a direct application of the previous results (\eg \citep{Mohri01}) as a careful decomposition is required to deal with the fact the summands are defined on overlapping sets of variables.

%
\begin{lemma}\label{lemma:concentration}
Let $\nF \subset \left\lbrace f : \nZ^{d+1} \ra [0, B]\right\rbrace $ be a class of functions
defined on blocks of $d+1$ variables. 
For any integers $\mu, a$, such that $4\mu a d = N$, let $\tilde{\bz}_1^N$ be an ($\mu$,$2ad$)-independent block copy of $\bz_1^N$.
Then
\begin{align}
& \PPBig{\sup_{f\in\nF} \absBig{\frac{1}{n}\sum_{i\in\nI}f(\bz_{i-d+1}^{i+1}) - \EE{f(\bz_{1}^{d+1})} } > t } \\
& \leq 32 \EE{\nN_1(t/64, \nF, \left\lbrace\tilde{\bz}_{i-d+1}^{i+1}, i\in\nI \right\rbrace )} e^{- \mu t^2 / (2048B^2)} + 4(\mu-1)\beta(2ad).
\end{align}
\end{lemma}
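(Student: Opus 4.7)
The plan is to combine a symmetrization step with the independent-block technique of Lemma~\ref{app:beta-mixing} and a covering number union bound, then apply Hoeffding's inequality at the granularity of blocks rather than individual summands. The main obstacle is that consecutive windows $\bz_{i-d+1}^{i+1}$ overlap in $d$ coordinates, so on the independent block copy the summands are still dependent within each block; Hoeffding must therefore be applied to block-level aggregates, and the summands whose window crosses a block boundary need separate treatment.

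First I would introduce a ghost process $\bz'$ with the same law as $\bz$, independent of $\bz$, and use the standard Markov/Chebyshev symmetrization to bound the probability of interest by
\[
2\,\PPbig{\sup_{f\in\nF}\absbig{\tfrac{1}{n}\sum_{i\in\nI}(f(\bz_{i-d+1}^{i+1}) - f({\bz'}_{i-d+1}^{i+1}))} > t/2}.
\]
This requires the variance of the empirical mean of $f$ to be of order $B^2/\mu$, which I would verify directly from the block structure. I would then invoke Lemma~\ref{app:beta-mixing} to replace both $\bz$ and $\bz'$ by their $(\mu,2ad)$-independent block copies $\tilde{\bz}$ and $\tilde{\bz}'$; each replacement costs at most $(\mu-1)\beta(2ad)$ since the indicator event is bounded by $1$, and together with the outer factor of $2$ from symmetrization this produces the $4(\mu-1)\beta(2ad)$ term in the claim.

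On the independent block copy I would condition on the windows $\{\tilde{\bz}_{i-d+1}^{i+1} : i\in\nI\}$, pick a minimal empirical $\ell_1$-cover of $\nF$ of radius $t/64$ on this sample, and union bound over the cover. For each cover element $v$, I would group the $n$ values $v(\tilde{\bz}_{i-d+1}^{i+1})$ according to which of the $2\mu$ length-$2ad$ blocks contains the index $i$, yielding $2\mu$ block sums $G_k$ that are mutually independent and individually bounded by $2adB$. Hoeffding's inequality applied to $\tfrac{1}{n}\sum_k G_k$ (with $n \approx 4\mu ad$) then produces an exponential bound of the form $\exp(-c\mu t^2/B^2)$; careful bookkeeping through the symmetrization, the cover radius, and the ghost split yields the claimed $32\,\EE{\nN_1(t/64,\nF,\{\tilde{\bz}_{i-d+1}^{i+1}\})}\exp(-\mu t^2/(2048 B^2))$.

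The delicate point in the last step, and the source of the ``careful decomposition'' flagged in the paragraph preceding the lemma, is that a window $\tilde{\bz}_{i-d+1}^{i+1}$ may straddle the boundary between two consecutive length-$2ad$ blocks, so $G_k$ as defined above is not strictly a function of block $k$ alone. I would resolve this by assigning each summand to the block containing its leftmost coordinate, so that $G_k$ depends on block $k$ and at most the first $d$ coordinates of block $k+1$, and then splitting the outer sum into its even-$k$ and odd-$k$ parts to restore independence among the retained block sums. This is what prevents a direct invocation of the independent-block concentration bounds of \cite{Mohri01} and is the main technical content of the lemma.
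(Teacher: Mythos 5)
Your plan has the right ingredients (independent blocks, block-level Hoeffding, covering numbers, and awareness that the overlapping windows are the delicate point), but the order in which you apply them breaks the key step. You symmetrize first and then invoke Lemma~\ref{app:beta-mixing} to replace $\bz$ (and $\bz'$) by their independent block copies for an event that involves \emph{all} summands $i\in\nI$, i.e.\ a function of all $2\mu$ consecutive blocks. Yu's Corollary 2.7 only compares expectations of functions defined on \emph{every second} block, because it is the gaps of length $2ad$ between the retained blocks that produce the $\beta(2ad)$ price; for adjacent blocks with no gap the joint law is not close in total variation to the product of the block marginals (the relevant coefficient would be $\beta(0)$, which is not small for a dependent process), so the claimed cost $(\mu-1)\beta(2ad)$ per replacement is not available for your full-sum event. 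Deferring the even/odd split until after the coupling, "to restore independence among the retained block sums," cannot repair this: by then the coupling step you relied on has no justification. The split must happen on the original process \emph{before} the coupling — this is exactly the decomposition in the paper's proof (first split $\nI$ by the parity of $\floor{i/d}$ so that no window $\bz_{i-d+1}^{i+1}$ straddles a block boundary, then split the $2\mu$ blocks into the even and odd families $\nJ^e,\nJ^o$, and only then apply Lemma~\ref{app:beta-mixing} to each family of $\mu$ gap-separated blocks); these two splits are precisely where the factors $32=2\cdot2\cdot8$ and $4=2\cdot2$ come from.

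Two further points would need attention even after fixing the order. First, your initial ghost-sample symmetrization requires a variance bound for the empirical mean on the \emph{dependent} process; this does not follow "from the block structure" but needs mixing-based covariance bounds (or extra conditions) that do not appear in the stated bound. The paper avoids this entirely by coupling first and then applying the standard i.i.d.\ covering-number bound (Gy\"orfi et al., Theorem 9.1, applied to the composite block hypotheses $F_f$), which contains the symmetrization internally where independence is actually available. Second, your fix of assigning each window to the block of its leftmost coordinate makes each block sum $G_k$ depend on the first $d$ coordinates of block $k+1$, so the effective gaps shrink to $2ad-d$ and you would only get $\beta(2ad-d)$ rather than $\beta(2ad)$; relatedly, choosing the cover from the sample and then applying Hoeffding to the same sample for a fixed cover element is circular unless you run the ghost-sample (or sign) argument at the block level, which again is what the cited i.i.d.\ result packages. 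Rearranged correctly, your argument essentially becomes the paper's proof.
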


\begin{proof}

We start by splitting the index set $\nI = \left\lbrace d, d+1, \dots, N-1 \right\rbrace$ into two sets $\nI_1$ and $\nI_2$, such that $\nI_1 = \left\lbrace i \in \nI : \floor{\frac{i}{d}} \text{ is odd} \right\rbrace $ and $\nI_2 = \left\lbrace i \in \nI : \floor{\frac{i}{d}} \text{ is even} \right\rbrace $.
Then, recalling that $n=N-d-1$ and, hence, $\frac{1}{n} = \frac{N}{N-d-1}\frac{1}{N} \leq \frac{2}{N}$ for $N > 2(d+1),$
\footnote{
if $N \leq 2(d+1)$, then we need $\mu$ and $a$ such that $4\mu a \leq 2(1+\frac{1}{d}) \leq 4$, which holds only for $\mu = a = 1$ and the bound of the lemma is trivial in this case.
}
\begin{align}
& \PPBig{\sup_{f\in\nF} \absBig{\frac{1}{n}\sum_{i\in\nI}f(\bz_{i-d+1}^{i+1}) - \EE{f(\bz_{1}^{d+1})} } > t } \\ & \leq \sum_{j=1}^{2}\PPBig{\sup_{f\in\nF} \absBig{\frac{1}{N}\sum_{i\in\nI_j}f(\bz_{i-d+1}^{i+1}) - \EE{f(\bz_{1}^{d+1})} } > t/4 }.
\end{align}

Both summands can be bounded in the same way, so we focus on the first one.
We are going to use the independent block technique due to \cite{YuBin01}.
For this we choose $\mu$ and $a$ such that $4\mu a d  = N$.
We will split the sample $\bz_1^N$ into $2\mu$ blocks of $2ad$ consecutive points (Note that is a different splitting than the one above, since here we split the variables themselves).
Thanks to the above step, there is no function that takes variables from different blocks.
Let $\nJ^e = \left\lbrace y_1, \dots, y_\mu \right\rbrace $, where $y_k = ((2k-2)2ad+1, \dots, (2k-2)2ad+2ad)$, and $\nJ^o = \left\lbrace \bar{y}_1, \dots, \bar{y}_\mu \right\rbrace $, where $\bar{y}_k = ((2k-1)2ad+1, \dots, (2k-1)2ad+2ad)$.
In this way, the blocks within $\nJ^e$ and $\nJ^o$ are separated by gaps of $2ad$ points.
Note that thanks to the first splitting, we can rewrite
\begin{align}
\frac{1}{N}\sum_{i\in\nI_1}f(\bz_{i-d+1}^{i+1}) & = \frac{1}{N}\sum_{i\in\nI_1\cap\nJ^e}f(\bz_{i-d+1}^{i+1}) + \frac{1}{N}\sum_{i\in\nI_1\cap\nJ^o}f(\bz_{i-d+1}^{i+1}) \\
& = \frac{1}{2\mu}\sum_{j=1}^{\mu}\frac{1}{2ad} \sum_{i\in\nI_1\cap y_j}f(\bz_{i-d+1}^{i+1}) + \frac{1}{2\mu}\sum_{j=1}^{\mu}\frac{1}{2ad} \sum_{i\in\nI_1\cap\bar{y}_j}f(\bz_{i-d+1}^{i+1}) \\
& = \frac{1}{2\mu}\sum_{j=1}^{\mu} F_f(\bz_{y_j}) + \frac{1}{2\mu}\sum_{j=1}^{\mu} F_f(\bz_{\bar{y}_j}),
\end{align}
where we defined composite hypotheses $F_f(\bz_y) = \frac{1}{2ad} \sum_{i\in\nI_1\cap y}f(\bz_{i-d+1}^{i+1})$ with $\bz_{y} = \left\lbrace \bz_i \right\rbrace_{i\in y} $.
Then
\begin{align}
& \PPBig{\sup_{f\in\nF} \absBig{\frac{1}{N}\sum_{i\in\nI_1}f(\bz_{i-d+1}^{i+1}) - \EE{f(\bz_{1}^{d+1})} } > t/4 }\\
 & \leq 2 \PPBig{\sup_{f\in\nF} \absBig{\frac{1}{\mu}\sum_{j=1}^{\mu} F_f(\bz_{y_j}) - \EE{F_f(\bz_{y_1})} } > t/4 }.
\end{align}
Let $\tilde{\bz}_{y_1}, \dots, \tilde{\bz}_{y_\mu}$ be the random variables having the same marginal distributions as $\bz_{y_j}$'s, but drawn independently from each other.
Using the fact that probability of an event is an expectation of the indicator of the same event, we can apply Lemma \ref{app:beta-mixing} to obtain
\begin{align}
& \PPBig{\sup_{f\in\nF} \absBig{\frac{1}{\mu}\sum_{j=1}^{\mu} F_f(\bz_{y_j}) - \EE{F_f(\bz_{y_1})} } > t/4 } \\ 
&  \leq \PPBig{\sup_{f\in\nF} \absBig{\frac{1}{\mu}\sum_{j=1}^{\mu} F_f(\tilde{\bz}_{y_j}) - \EE{F_f(\tilde{\bz}_{y_1})} } > t/4 } + (\mu-1)\beta(2ad).
\end{align}
The first term can be bounded using the standard techniques for uniform laws of large numbers for \iid random variables.
We are going to use the bound in terms of covering numbers.
Following the standard proof, \eg \citep[Theorem 9.1]{Gyorfi01}, we obtain
\begin{align}
\PPBig{\sup_{f\in\nF} \absBig{&\frac{1}{\mu}\sum_{j=1}^{\mu} F_f(\tilde{\bz}_{y_j}) - \EE{F_f(\tilde{\bz}_{y_1})} } > t/4 } \\ &\leq 8 \EE{\nN_1(t/32, F(\nF), \left\lbrace\tilde{\bz}_{y_i} \right\rbrace_{j=1}^\mu) }e^{-\mu t^2/(2048B^2)},
\end{align}
where $F(\nF)$ is a class of composite hypotheses.
The only thing that is left is to connect the covering number of $F(\nF)$ to the covering number of $\nF$.
This follows from the fact that for any $f, g \in \nF$ and any fixed blocks $y_1, \dots, y_\mu$ on $z_1^N$:
\begin{align}
\frac{1}{\mu}\sum_{j=1}^\mu \abs{F_f(z_{y_i}) - F_g(z_{y_i})} & \leq \frac{2}{N}\sum_{j=1}^\mu \sum_{i\in\nI_1\cap y_i} \abs{f(z_{i-d+1}^{i+1}) - g(z_{i-d+1}^{i+1})} \\
& \leq \frac{2}{n}\sum_{i\in\nI} \abs{f(z_{i-d+1}^{i+1}) - g(z_{i-d+1}^{i+1})}.
\end{align}

\end{proof}

\begin{corollary}\label{cor:concentration}
For a fixed $\bar{y} \in \nZ^{d}$ let 
\begin{equation}
\nF = \left\lbrace f(\bar{z}) = \frac{1}{b^{d}}\loss(h, \bar{z}_{d+1})\kernel{(\bar{y} - \bar{z}_1^d)/b}  : h \in \nH \right\rbrace \subset \{f:\nZ^{d+1} \ra \nR_+\}.
\end{equation}
Assume that loss function $\loss$ is $L_{\nH}$-Lipschitz in the first argument.
Then under the conditions of Lemma \ref{lemma:concentration},
the following holds for $\tilde{t} = tb^d/(64K_1L_{\nH})$
\begin{align}
\PPBig{\sup_{f\in\nF} &\absBig{\frac{1}{n}\sum_{i\in\nI}f(\bz_{i-d+1}^{i+1}) - \EE{f(\bz_{1}^{d+1})} } > t } \\
& \leq 32 \nN_1(\tilde{t}, \nH, n ) e^{- \mu t^2 b^{2d} / (2048K_1^2)} + 4(\mu-1)\beta(2ad).
\end{align}
\end{corollary}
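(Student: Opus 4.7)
The strategy is direct: apply Lemma~\ref{lemma:concentration} to the class $\nF$, and then reduce the $\nF$-covering number appearing on the right-hand side to an $\nH$-covering number at a suitably rescaled radius. Both steps are essentially substitutions once the right constants are identified.

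For the first step, I would check that $\nF$ is uniformly bounded. Since $\loss$ takes values in $[0,1]$ and the smoothing kernel satisfies $\kernel \leq K_1$, every $f \in \nF$ lies in $[0, K_1/b^d]$. Taking $B = K_1/b^d$ in Lemma~\ref{lemma:concentration} produces exactly the exponent $-\mu t^2/(2048 B^2) = -\mu t^2 b^{2d}/(2048 K_1^2)$ and the additive mixing term $4(\mu-1)\beta(2ad)$ that appear in the claim. So it only remains to replace the covering number of $\nF$ by that of $\nH$.

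For the second step, I would show that $\nN_1(t/64, \nF, \{\tilde\bz_{i-d+1}^{i+1}\}_{i\in\nI}) \leq \nN_1(\tilde t, \nH, n)$ almost surely, where $\tilde t = tb^d/(64K_1L_{\nH})$. Fix a sample and a $\tilde t$-cover $V$ of $\nH$ (with respect to the $\ell_1$-norm on the projected sample $\{\tilde\bz_{i+1}\}_{i\in\nI}$), and map each $v \in V$ to $g(\bar z) = \frac{1}{b^d}\loss(v,\bar z_{d+1})\kernel{(\bar y - \bar z_1^d)/b}$. Because $f$ and $g$ share the same kernel factor, the $L_{\nH}$-Lipschitz assumption on $\loss$ together with $\kernel \leq K_1$ yields, pointwise,
\[
|f(\bar z) - g(\bar z)| \leq \frac{K_1 L_{\nH}}{b^d}\,|h(\bar z_{d+1}) - v(\bar z_{d+1})|.
\]
Averaging over the sample contributes a factor of at most $\tilde t$, and the prefactor $K_1L_{\nH}/b^d$ then produces exactly $t/64$, so $\{g:v\in V\}$ is a $(t/64)$-cover of $\nF$ of the same cardinality. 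Passing to the maximal covering number absorbs the outer expectation, and combining the two substitutions in Lemma~\ref{lemma:concentration} gives the stated bound. The only point worth being careful about is tracking the constants in the covering-radius reduction so that $K_1L_{\nH}/b^d$ converts precisely to $\tilde t = tb^d/(64K_1L_{\nH})$; everything else is bookkeeping.
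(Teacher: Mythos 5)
Your proposal is correct and follows essentially the same route as the paper: apply Lemma~\ref{lemma:concentration} with $B = K_1/b^d$, then convert the covering number of $\nF$ into one of $\nH$ at radius $\tilde{t} = tb^d/(64K_1L_{\nH})$ using the bound $\nK \leq K_1$ and the $L_{\nH}$-Lipschitzness of the loss, finally passing to the maximal covering number to absorb the expectation. The constant bookkeeping ($K_1L_{\nH}/b^d$ rescaling the radius, $B^2 = K_1^2/b^{2d}$ in the exponent) matches the paper exactly.
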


\begin{proof}
The corollary follows from the proof of Lemma \ref{lemma:concentration} by a further upper bound on the covering number.
First, for any two $f, f' \in \nF$ on fixed $z_1^n$
\begin{align}
\frac{1}{n}\sum_{i\in\nI}&\abs{ f(z_{i-d+1}^{i+1}) - f'(z_{i-d+1}^{i+1}) }\\ & =  \frac{1}{nb^d}\sum_{i\in\nI}\abs{ \nK((\bar{y}-z_{i-d+1}^i)/b)(\loss(h, z_{i+1}) - \loss(h', z_{i+1})) } \\
& \leq\! \frac{K_1}{nb^d}\sum_{i\in\nI} \abs{ \loss(h, z_{i+1}) - \loss(h', z_{i+1}) }.
\end{align}
Hence, $\nN_1(\varepsilon, \nF, \left\lbrace\tilde{\bz}_{i-d+1}^{i+1}, i\in\nI \right\rbrace) \leq \nN_1(\varepsilon b^d/K_1, \nL(\nH), \left\lbrace\tilde{\bz}_{i+1}, i\in\nI \right\rbrace)$, where $\nL(\nH) = \left\lbrace f(z) = \loss(h, z) : h \in\ \nH \right\rbrace$.
Second, by the Lipschitz property of the loss function we get: $\nN_1(\varepsilon b^d/K_1, \nL(\nH), \left\lbrace\tilde{\bz}_{i+1}, i\in\nI \right\rbrace) \leq \nN_1(\varepsilon b^d/(K_1 L_\nH), \nH, \left\lbrace\tilde{\bz}_{i+1}, i\in\nI \right\rbrace)$.
\end{proof}

Now we are ready to prove Theorem \ref{theorem:consistency}.


The proof is based on the argument of \cite{collomb1984proprietes} with appropriate modifications to achieve the uniform convergence over hypotheses.
\begin{proof}[Theorem \ref{theorem:consistency}]
We start by reducing the problem to the supremum over the histories.
\CHLdrop{
Denote the $\sup_{h\in\nH}\abs{\risk{h}{\bar{z}} - \hrisk{h}{\bar{z}}}$ by $\Psi(\bar{z})$. Then for every $\varepsilon > 0$:
\begin{align}
\PP{\Psi(\bz_{N-d+1}^N) > t} & \leq \PPc{\Psi(\bz_{N-d+1}^N) > t}{\bz_{N-d+1}^N \in K_\varepsilon}\PP{\bz_{N-d+1}^N \in K_\varepsilon} + \varepsilon \\
& \leq \PPc{\sup_{\bar{z}\in K_\varepsilon} \Psi(\bar{z}) > t}{\bz_{N-d+1}^N \in K_\varepsilon}\PP{\bz_{N-d+1}^N \in K_\varepsilon} + \varepsilon \\
& = \PP{\sup_{\bar{z}\in K_\varepsilon} \Psi(\bar{z}) > t \ \text{and} \ \bz_{N-d+1}^N \in K_\varepsilon} + \varepsilon \\
& \leq \PP{\sup_{\bar{z}\in K_\varepsilon} \Psi(\bar{z}) > t} + \varepsilon
\end{align}
}
Then we make the following decomposition
\begin{align}
\sup_{h, \bar{z}}\abs{\risk{h}{\bar{z}} - \hrisk{h}{\bar{z}}} \leq  \frac{1}{\inf_{\bar{z}} \EE{\hdenum{\bar{z}}}}( T_1 + T_2 + T_3 ),
\end{align}
where 
\begin{align}
T_1 & = \sup_{h, \bar{z}}\abs{\hrisk{h}{\bar{z}}(\EE{\hdenum{\bar{z}}} - \hdenum{\bar{z}})}, \\
T_2 & = \sup_{h, \bar{z}}\abs{\hnumer{h}{\bar{z}} - \EE{\hnumer{h}{\bar{z}}}}, \\
T_3 & = \sup_{h, \bar{z}}\abs{\EE{\hnumer{h}{\bar{z}}} - \risk{h}{\bar{z}}\EE{\hdenum{\bar{z}}}}.
\end{align}
First, note that by the \lebesguecont\ assumption $\EE{\hdenum{\bar{z}}} \geq D_0 \int\nK(\bar{u})d\bar{u} = D_0$.
A minor modification of Lemma 5 from \citep{collomb1984proprietes} coupled with \lipschitzhist\ gives us the convergence of $T_3$.
Next, note that, by Lemma \ref{lemma:densitycovering}, \ $\PP{T_i > t} \leq \nN\PP{\tilde{T}_i > t'}$ for $i=1,2$, where
\begin{align}
\tilde{T}_1 & = \sup_{h}\abs{\EE{\hdenum{\bar{z}}} - \hdenum{\bar{z}}}, \\
\tilde{T}_2 & = \sup_{h}\abs{\hnumer{h}{\bar{z}} - \EE{\hnumer{h}{\bar{z}}}}.
\end{align}
and $\nN$ is a covering of a $kd$-dimensional hypercube with an appropriate ball width.
Now, by Lemma \ref{lemma:concentration}, we get a bound on $\tilde{T}_1$ and $\tilde{T}_2$ and obtain the convergence in probability.

For exponentially $\beta$-mixing processes, the same Lemma gives an exponential bound on $\tilde{T}_1$ and $\tilde{T}_2$ and hence we get the almost sure convergence of $T_1$ and $T_2$ to 0 (by the Borel-Cantelli lemma).
\end{proof}

The proof of the convergence rate requires a bit different decomposition than in Theorem \ref{theorem:consistency} and more delicate treatment of the terms using the additional assumptions. 
We introduce two further lemmas: Lemma~\ref{lemma:firstargument} shows how to 
express the concentration of $\hrisk{h}{\bar{z}}$ in terms of the concentration 
of $\hnumer{h}{\bar{z}}$ and $\hdenum{\bar{z}}$.
Lemma~\ref{lemma:densitycovering} uses covers to eliminate the supremum over $\bar{z}$.
\begin{lemma}\label{lemma:firstargument}
Assume \lebesguecont\ and \twicediff.
%
%
Then, for $R=\risk{h}{\bar{z}}$, $\hat R=\hrisk{h}{\bar{z}}$, 
$\hat q=\hnumer{h}{\bar{z}}$, $\hat p=\hdenum{\bar{z}}$, and with $t_1 = \frac{1}{2}(tD_0 - K_2D_2d^2b^2)$,
\begin{eqnarray}\label{eq:main-split}
\PPBig{\sup_{h,\bar{z}}\absbig{\hat R - R} > t} &\leq& \PPBig{\sup_{h,\bar{z}}\absbig{\hat q - \EE{\hat q}} > t_1} + \PPBig{\sup_{\bar{z}} \absbig{\hat p - \EE{\hat p}} > t_1}.
\end{eqnarray}
\end{lemma}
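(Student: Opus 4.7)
The plan is to decompose $\hat R - R$ so that each piece is controlled either by one of the two concentration events on the right-hand side or by a deterministic kernel-bias term. Starting from $\hat R - R = (\hat q - R\hat p)/\hat p$ and adding and subtracting $\EE{\hat q}$ and $R\EE{\hat p}$ in the numerator, I would write
\begin{equation*}
\hat q - R\hat p \;=\; (\hat q - \EE{\hat q}) \;+\; (\EE{\hat q} - R\EE{\hat p}) \;+\; R(\EE{\hat p} - \hat p).
\end{equation*}
By the triangle inequality and $R\leq 1$, the first and third terms are controlled uniformly in $h,\bar z$ by the two concentration events at level $t_1$, while the middle term is deterministic and is handled by the kernel-bias analysis described next.

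The key analytic step is to show that the bias satisfies $|\EE{\hat q(h,\bar z)} - R(h,\bar z)\EE{\hat p(\bar z)}| \leq K_2 D_2 d^2 b^2$ uniformly in $h$ and $\bar z$. Using the defining identity $\numer{h}{\bar z} = R(h,\bar z)\denum{\bar z}$ from \twicediff, the bias splits as $(\EE{\hat q} - \numerna) - R(\EE{\hat p} - \denumna)$. Each summand is a textbook kernel-bias integral, namely $\int [f(\bar z - b\bar u) - f(\bar z)]\nK(\bar u)\,d\bar u$ for $f$ equal to $\numer{h}{\cdot}$ or $\denumna$. A second-order Taylor expansion of $f$ around $\bar z$, combined with the first-moment property $\int \bar u_i \nK = 0$ (which kills the linear term) and the second-moment bound $\int \bar u_i \bar u_j \nK \leq K_2$ (which controls the quadratic remainder), together with the $D_2$ bound on second derivatives from \twicediff, gives at most $\tfrac12 K_2 D_2 d^2 b^2$ per summand, hence $K_2 D_2 d^2 b^2$ in total after applying $R\leq 1$.

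Substituting the defining equation $t_1 = \tfrac12(tD_0 - K_2 D_2 d^2 b^2)$ into the triangle inequality then forces $|\hat q - R\hat p| \leq 2t_1 + K_2 D_2 d^2 b^2 = tD_0$ on the intersection of the two concentration events. To convert this into a bound on the ratio $\hat R - R$, I need a lower bound on $\hat p$: the \lebesguecont\ assumption gives $\denumna(\bar z) \geq D_0$, the same Taylor argument propagates this to $\EE{\hat p(\bar z)} \geq D_0 - \tfrac12 K_2 D_2 d^2 b^2$, and the concentration $|\hat p - \EE{\hat p}|\leq t_1$ then lower-bounds $\hat p$ in terms of $D_0$. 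Dividing the numerator bound by this lower bound and simplifying using the defining equation for $t_1$ yields $\sup_{h,\bar z}|\hat R - R|\leq t$. Taking the contrapositive and applying the union bound over the two exceptional events then produces \eqref{eq:main-split}.

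The main obstacle I expect is the book-keeping around the denominator $\hat p$: the choice of $t_1$ must be precisely calibrated so that the numerator bound of order $tD_0$ and the denominator lower bound of order $D_0$ combine cleanly to give $|\hat R - R|\leq t$ without loss of constants; this calibration is what forces the somewhat rigid form of $t_1$. The Taylor expansion for the bias and the triangle inequality for the stochastic fluctuations are otherwise routine once the correct three-term splitting is in place.
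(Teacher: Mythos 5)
Your Taylor/bias computation and the union-bound structure are fine (and essentially identical to the paper's), but the final division step contains a genuine gap, and it is exactly the calibration issue you flagged. You bound the numerator by $\sup_{h,\bar z}|\hat q - R\hat p| \le 2t_1 + K_2D_2d^2b^2 = tD_0$ on the intersection of the two good events, and you lower-bound the random denominator by $\hat p \ge \EE{\hat p} - t_1 \ge D_0 - t_1$ (your Taylor detour even gives slightly less). Plugging in $t_1=\tfrac12(tD_0-K_2D_2d^2b^2)$, the denominator bound is $D_0(1-t/2)$ up to the bias term, which is strictly smaller than $D_0$ whenever $t_1>0$. Hence the quotient gives only $|\hat R - R|\le tD_0/(D_0-t_1)\approx 2t/(2-t) > t$, not $\le t$; in fact $tD_0/(D_0-t_1)\le t$ would force $t_1\le 0$. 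So with the stated $t_1$ your route proves the lemma only with a degraded threshold (equivalently, you would have to shrink $t_1$ in a $t$-dependent way), and no choice of bookkeeping around a lower bound on the random $\hat p$ obtained from the same concentration event can recover the exact constants.

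The paper sidesteps the problem by never lower-bounding the random $\hat p$ at all. It uses the identity $\hat R - R = \frac{\hat R\,(\EE{\hat p}-\hat p)}{\EE{\hat p}} + \frac{\hat q - q}{\EE{\hat p}} + \frac{R\,(p-\EE{\hat p})}{\EE{\hat p}}$ (with $q=Rp$), whose denominator is the deterministic $\EE{\hat p}\ge D_0$ by \lebesguecont, and in which the random denominator enters only through $\hat R=\hat q/\hat p\le 1$ (since $0\le\loss\le1$ implies $\hat q\le\hat p$). This yields $|\hat R - R|\le \frac{1}{D_0}\big(|\hat p-\EE{\hat p}|+|\hat q-q|+|p-\EE{\hat p}|\big)$; splitting the event $\{|\hat R - R|>t\}$ into the $\hat q$-group and the $\hat p$-group at level $\tfrac12 tD_0$ each, and absorbing the two $\tfrac12 K_2D_2d^2b^2$ bias terms via exactly your Taylor argument, produces the stated $t_1$ with no loss. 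If you replace your decomposition $(\hat q - R\hat p)/\hat p$ by this one, the rest of your write-up goes through unchanged.
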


\begin{proof}
We start with the following decomposition
\begin{align}
\hrisk{h}{\bar{z}} - \risk{h}{\bar{z}} &= 
\frac{ \hrisk{h}{\bar{z}}( \EE{\hdenum{\bar{z}}} - \hdenum{\bar{z}}) }{\EE{\hdenum{\bar{z}}}} + \frac{\hnumer{h}{\bar{z}} - \numer{h}{\bar{z}}}{\EE{\hdenum{\bar{z}}}} \\& + \frac{\risk{h}{\bar{z}}(\denum{\bar{z}} - \EE{\hdenum{\bar{z}}})}{\EE{\hdenum{\bar{z}}}}.
\end{align}
Using the fact that \lebesguecont\ implies $\EE{\hdenum{\bar{z}}} \geq D_0 \int \nK(\bar{z})d\bar{z} = D_0$ and that $\hrisk{h}{\bar{z}}$ and $\risk{h}{\bar{z}}$ are both upper bounded by 1, we can bound the left hand side of (\ref{eq:main-split}) by
\begin{align}
& \PP{\sup_{h,\bar{z}}\abs{\hnumer{h}{\bar{z}} - \numer{h}{\bar{z}}} > \frac{1}{2}tD_0} \\
& + \PP{\sup_{\bar{z}}\abs{\hdenum{\bar{z}} - \EE{\hdenum{\bar{z}}}} + \sup_{\bar{z}}\abs{\denum{\bar{z}} - \EE{\hdenum{\bar{z}}}} > \frac{1}{2}tD_0}.
\end{align}
The statement of the lemma will be proven if we show that $\abs{\numer{h}{\bar{z}} - \EE{\hnumer{h}{\bar{z}}} }$ and $\abs{ \denum{\bar{z}} - \EE{\hdenum{\bar{z}}} }$ are upper bounded by $\frac{1}{2}K_2D_2d^2b^2$.
We demostrate this only for $\numerna$.
Using the stationarity,
\begin{align}
\EE{\hnumer{h}{\bar{z}}} = \frac{1}{b^{d+1}}\int \kernel{(\bar{z}-\bar{u})/b} \numer{h}{\bar{u}} d\bar{u} = \int \kernel{\bar{u}} \numer{h}{\bar{z} - b\bar{u}}d\bar{u}.
\end{align}
Now we apply the Taylor expansion:
\begin{align}
\numer{h}{\bar{z}} - \EE{\hnumer{h}{\bar{z}}} & = - b \sum_{i=1}^{d} \frac{\partial}{\partial \bar{z}_i}\numer{h}{\bar{z}} \int \bar{u}_i \nK(\bar{u}) d\bar{u} \\&\quad + \frac{b^2}{2}\sum_{i,j =1}^{d}\frac{\partial^2}{\partial \bar{z}_i \partial \bar{z}_j}\numer{h}{\bar{\xi}} \int \bar{u}_i \bar{u}_j \nK(\bar{u}) d\bar{u}
\intertext{and, invoking the assumptions on the kernel,}
\Big|{\numer{h}{\bar{z}} - \EE{\hnumer{h}{\bar{z}}} }\Big| &\leq \frac{1}{2} D_2 K_2 d^2b^2.
\end{align}
\end{proof}

\begin{lemma}\label{lemma:densitycovering}
Let $\nN_\tau$ be an $\tau$-covering number of a $kd$-dimensional hypercube (in $\ell_2$-norm) with $\tau =  \left( \frac{b^{d+\gamma}t_1}{3L} \right)^\frac{1}{\gamma}$, then
\begin{eqnarray}
	\PPBig{\sup_{h,\bar{z}}\abs{\hnumer{h}{\bar{z}} - \EE{\hnumer{h}{\bar{z}}}} > t_1} &\leq& \nN_\tau \PPBig{\sup_{h}\abs{\hnumer{h}{\bar{z}} - \EE{\hnumer{h}{\bar{z}}}} > \frac{t_1}{3}},
\\
\PP{\sup_{\bar{z}}\abs{\hdenum{\bar{z}} - \EE{\hdenum{\bar{z}}}} > t_1} &\leq& \nN_\tau \PP{\abs{\hdenum{\bar{z}} - \EE{\hdenum{\bar{z}}}} > \frac{t_1}{3}}.
\end{eqnarray}
\end{lemma}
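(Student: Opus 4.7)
The plan is to reduce the continuous supremum over $\bar{z}\in[0,1]^{kd}$ to the maximum over a finite $\tau$-cover of the hypercube, bridging the gap between a generic $\bar{z}$ and its nearest cover point via the Lipschitz continuity of the smoothing kernel $\nK$. A union bound over the cover then yields both inequalities. Importantly, I do not need to discretize $\nH$: the Lipschitz bound on $\nK$ does not involve $h$, so the $\sup_h$ in the first statement passes through the argument unchanged and is left for the later application of Corollary~\ref{cor:concentration}.

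Fix a $\tau$-cover $\{\bar{z}^{(1)},\dots,\bar{z}^{(\nN_\tau)}\}$ of $[0,1]^{kd}$ in $\ell_2$, and for any $\bar{z}$ pick some $\bar{z}^{(j)}$ with $\|\bar{z}-\bar{z}^{(j)}\|_2\le\tau$. The kernel's Lipschitz property of order $\gamma$ with constant $L$ gives
\begin{align}
\absBig{\nK((\bar{z}-\bz_{i-d+1}^i)/b) - \nK((\bar{z}^{(j)}-\bz_{i-d+1}^i)/b)} \ \le\ L\,\|(\bar{z}-\bar{z}^{(j)})/b\|_2^\gamma \ \le\ L\tau^\gamma/b^\gamma.
\end{align}
Averaging over $i\in\nI$ and dividing by $b^d$ yields $\absbig{\hdenum{\bar{z}}-\hdenum{\bar{z}^{(j)}}}\le L\tau^\gamma/b^{d+\gamma}$, and the same bound transfers to the expectations by Jensen. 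For the numerator, the extra factor $\loss(h,\bz_{i+1})\in[0,1]$ preserves the estimate uniformly in $h$: $\absbig{\hnumer{h}{\bar{z}}-\hnumer{h}{\bar{z}^{(j)}}}\le L\tau^\gamma/b^{d+\gamma}$. The prescribed choice $\tau=(b^{d+\gamma}t_1/(3L))^{1/\gamma}$ is tuned exactly so that each such discretization error is at most $t_1/3$.

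Assembling via the triangle inequality, on the event $\sup_{h,\bar{z}}\absbig{\hnumer{h}{\bar{z}}-\EE{\hnumer{h}{\bar{z}}}}>t_1$ there must exist some $j$ and some $h$ with $\absbig{\hnumer{h}{\bar{z}^{(j)}}-\EE{\hnumer{h}{\bar{z}^{(j)}}}}>t_1/3$, hence $\sup_h\absbig{\hnumer{h}{\bar{z}^{(j)}}-\EE{\hnumer{h}{\bar{z}^{(j)}}}}>t_1/3$; the analogous implication holds for $\hdenum{\cdot}$. A union bound over the $\nN_\tau$ cover points, with each summand bounded by the worst-case single-point probability that appears on the right-hand side of the lemma, finishes both inequalities. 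The proof is essentially bookkeeping; the only step that requires any care is verifying that the Lipschitz estimate on $\nK$ really does transfer to the numerator uniformly in $h$, which it does because $\loss\in[0,1]$.
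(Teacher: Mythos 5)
Your proposal is correct and follows essentially the same route as the paper's proof: a three-term triangle-inequality split around the nearest cover point, the Lipschitz bound $L\tau^\gamma/b^{d+\gamma}$ on the kernel (uniform in $h$ since $\loss\in[0,1]$, and transferred to expectations), the choice of $\tau$ making each discretization term at most $t_1/3$, and a union bound over the $\nN_\tau$ cover points. No gaps.
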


\begin{proof}
We again prove the statement only for $\numerna$, while the argument for $\denumna$ goes along the same lines.
Let us consider $V$, a fixed $\tau$-covering of $\nZ^d$ with $\tau$ to be set later.
We denote by $v(\bar{z})$ the closest element of the covering to $\bar{z}$.
Then we have
\begin{align}
\sup_{h,\bar{z}}\abs{\hnumer{h}{\bar{z}} - \EE{\hnumer{h}{\bar{z}}}} & \leq \sup_{h,\bar{z}}\abs{\hnumer{h}{\bar{z}} - \hnumer{h}{v(\bar{z})}} \\& + \sup_{h,\bar{z}}\abs{\hnumer{h}{v(\bar{z})} - \EE{\hnumer{h}{v(\bar{z})}}} \\
& + \sup_{h,\bar{z}}\abs{\EE{\hnumer{h}{v(\bar{z})}} - \EE{\hnumer{h}{\bar{z}}}}.
\end{align}
Using the Lipschitz property of the kernel, we can bound $\hnumer{h}{\bar{z}} - \hnumer{h}{v(\bar{z})}$ by
\begin{align}
&\frac{1}{nb^{d}} \sum_{i\in\nI}\loss(h, \bz_{i+1})\left( \kernel{(\bar{z}-\bz_{i-d+1}^i)/b} - \kernel{(v(\bar{z})-\bz_{i-d+1}^i)/b} \right) \\
& \leq \frac{L}{nb^{d+\gamma}}\sum_{i\in\nI}\loss(h, \bz_{i+1})\norm{\bar{z} - v(\bar{z})}_2^\gamma \leq \frac{L\tau^\gamma}{b^{d+\gamma}}.
\end{align}
Now, setting $\tau = \left( \frac{b^{d+\gamma}t_1}{3L} \right)^\frac{1}{\gamma}$, we can ensure that
\begin{equation}
\sup_{h,\bar{z}}\abs{\hnumer{h}{\bar{z}} - \EE{\hnumer{h}{\bar{z}}}} \leq \sup_{h,v\in V}\abs{\hnumer{h}{v} - \EE{\hnumer{h}{v}}} + \frac{2}{3}t_1,
\end{equation}
which lead us to the statement of the lemma.
\end{proof}

\CHLdrop{CHL: Move to supplemental? It is worth noting that the covering number can be further upper bounded by $\frac{\text{diam}(G)\sqrt{d}}{(2\tau)^d}$, e.g.\ see \citep{London01}, while we do not include this bound into the statement to avoid overloading of the formulas.}

%
\begin{proof}[Theorem \ref{theorem:main}]
The proof is a combination of Lemmas \ref{lemma:firstargument} and \ref{lemma:densitycovering}, followed by Corollary \ref{cor:concentration}.
To obtain the final result we also bound the $\tau$-covering number of $kd$-dimensional hypercube by $\left( \frac{\sqrt{kd}}{2\tau} \right)^{kd}$.
Note that, technically, to have the concentration for $\hdenumna$ we do not need to take the last step in Lemma \ref{lemma:concentration} with coverings, however, to unify the final statement, we include the covering term in the bound.
\end{proof}

\section{Simulations}\label{sec:experiments}
This section illustrates
the problem setting and the proposed estimator of the conditional 
risk in a synthetic setting that is easy to analyze but difficult 
to learn for other techniques. 
Our goal is to highlight the differences between marginal and conditional risks and between conditioning on the finite history and full history.

Let $\nZ=\nX\times\nY$ with $\nX = [0,10]\times[0,10]\in\nR^2$ 
and $\nY= \{\pm 1\}$. 
%
We generate data using a time-homogeneous hidden Markov process with four 
latent states (Figure~\ref{subfig:process}). Each state, $i$, is associated 
with an emission probability distribution, $\mu_i(x,y)$, that is uniform in 
$x$ and deterministic in $y$, with $y=\operatorname{sign} f_i(x)$ for an 
affine function $f_i$. 
At any step, $i$, we observe a sample $z_i=(x_i,y_i)$ from the distribution  
associated with the current latent state $s_i$. 
Figure~\ref{subfig:sequence} depicts the situation for $N=1000$. 

Drawn without order, the empirical distribution of the samples $z_1^{N}$ 
resembles the limit distribution of the hidden Markov chain~(Figure~\ref{subfig:stationary}), 
which is also the marginal distribution of the next sample, $p(z_{N+1})$. 
Using the dependencies in the sequence, however, we can obtain a more informed 
estimate, $p(z_{N+1}|z_1^{N})$ or its finite history counterparts, $p(z_{N+1}|z_{N-d}^{N})$ 
for any history length $d$. 
Figures~\ref{subfig:conditional-h1} to \ref{subfig:conditional-full} visualize these distribution:
already with a short history length, the conditional distribution is easier learnable 
(has a lower Bayes risk) than the marginal one. 
Thus, identifying a good classifier for the conditional distribution at each step, \ie 
minimizing the conditional risk, can lead to an overall lower error rate than finding a 
single predictor that is optimal for the limit distribution, \ie minimizing the marginal 
risk. 

\begin{figure*}[t]\centering
	\begin{subfigure}[t]{.34\textwidth}\centering
		\includegraphics[width=\textwidth]{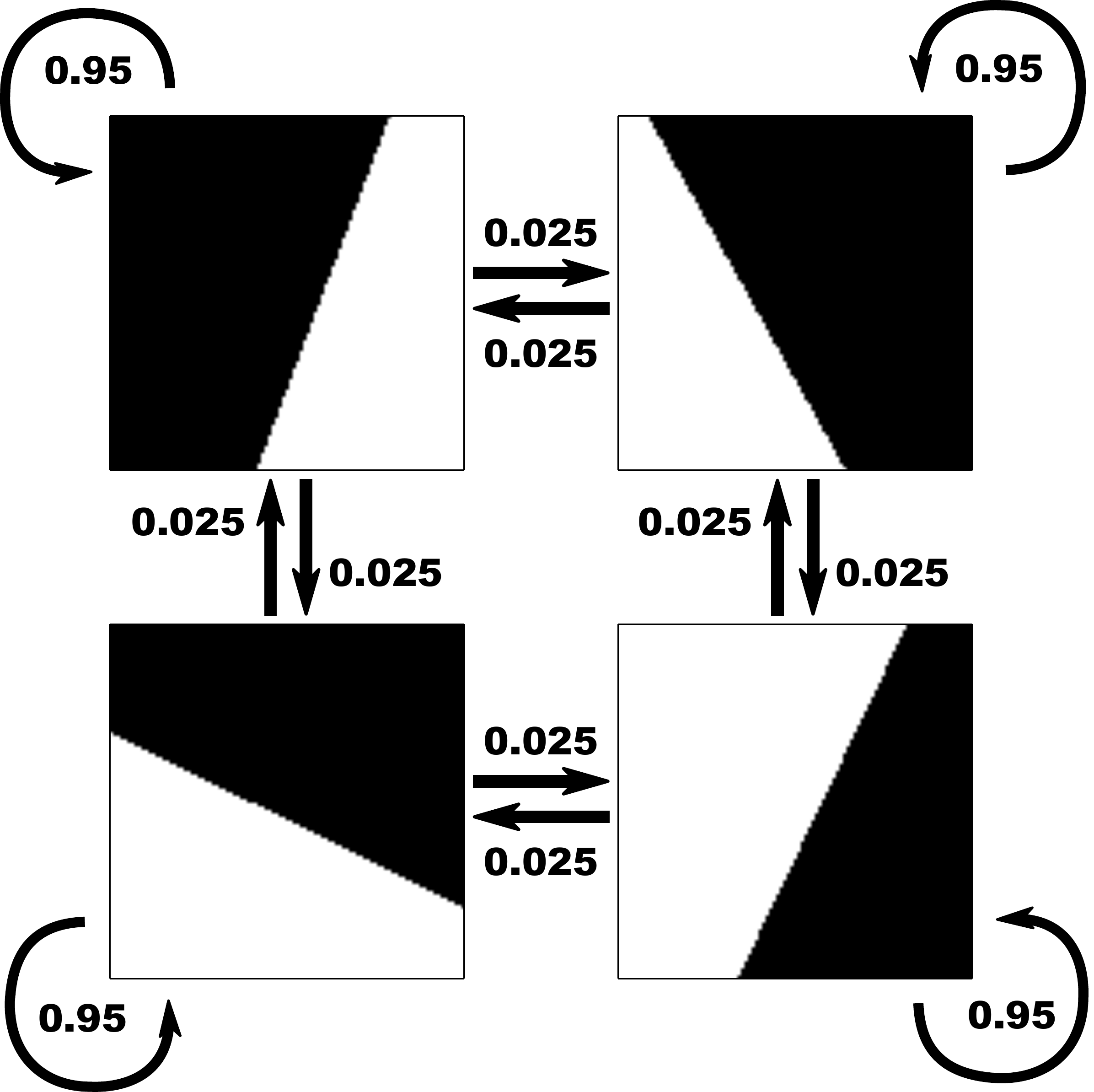}
		\caption{Generating Markov Process}\label{subfig:process}
	\end{subfigure}
	\begin{subfigure}[t]{.31\textwidth}\centering
		\raisebox{.02\textwidth}{\includegraphics[width=\textwidth]{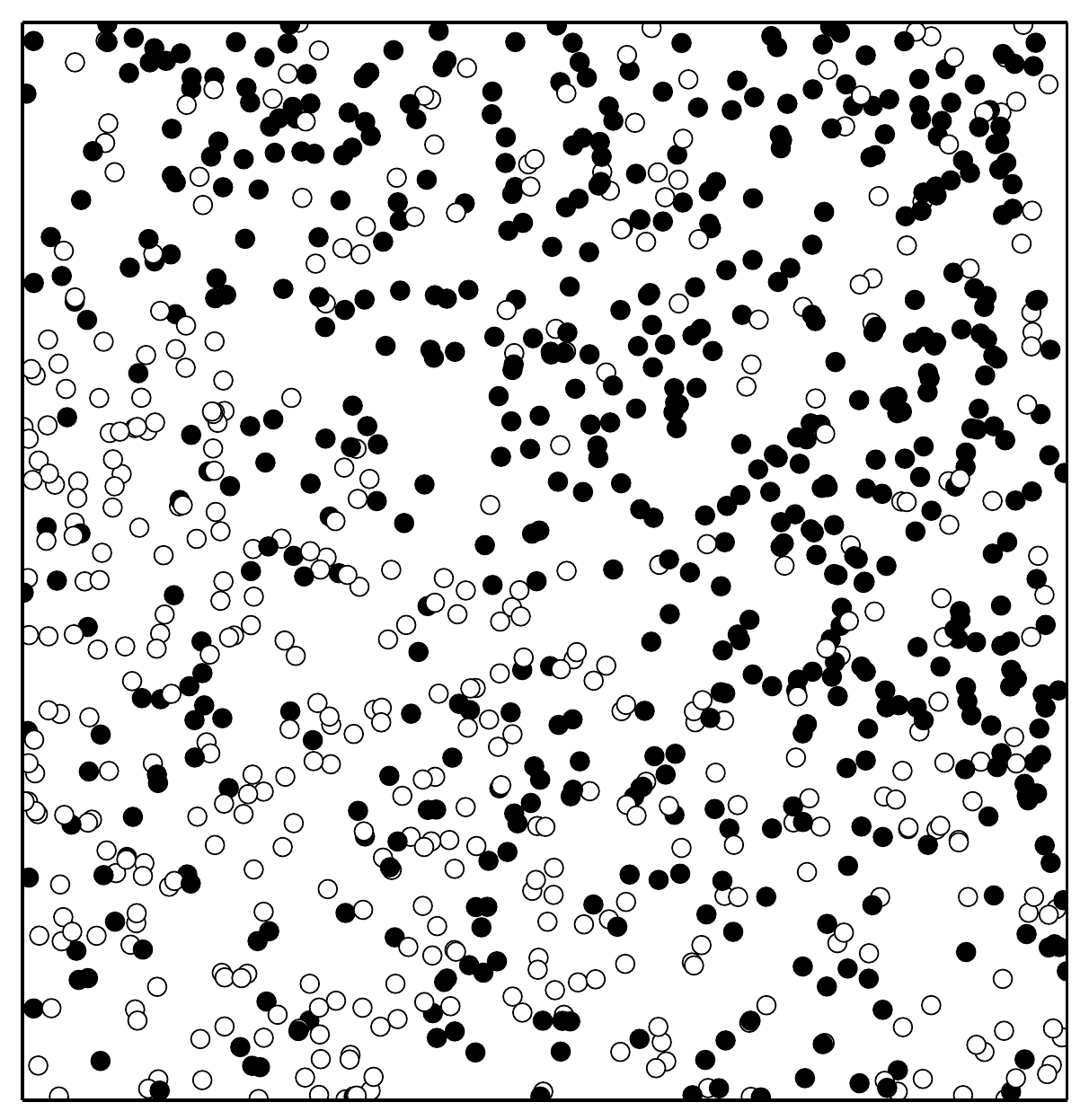}}
		\caption{Sample sequence, $z_1^{1000}$}\label{subfig:sequence}
	\end{subfigure}
	\begin{subfigure}[t]{.31\textwidth}\centering
        \raisebox{.02\textwidth}{\includegraphics[width=\textwidth]{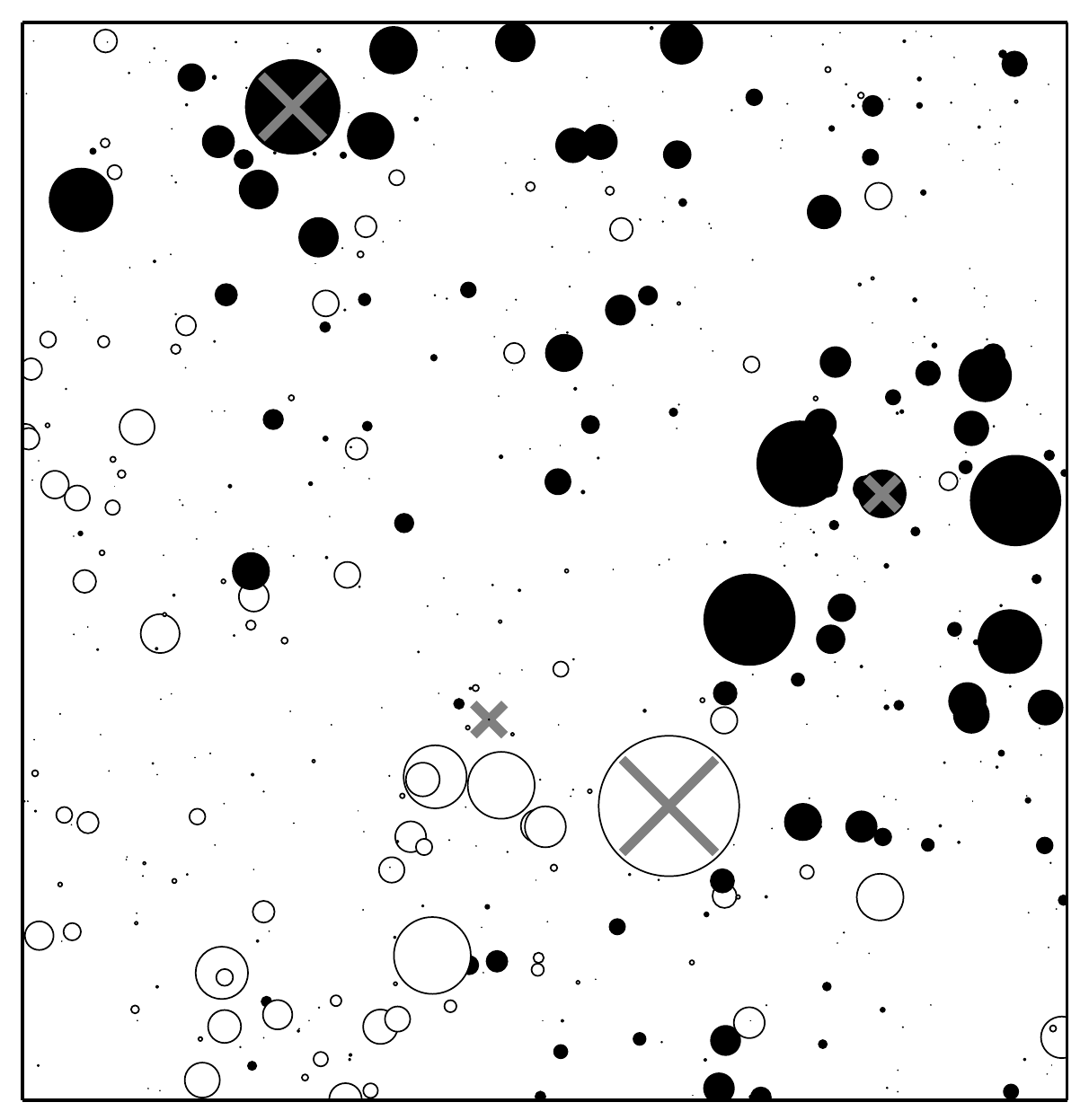}} 
        \caption{Contribution of each point to the conditional risk estimate}\label{subfig:weights} 
\end{subfigure}
\end{figure*}

\begin{figure*}[t]\centering
    \begin{subfigure}[t]{.19\textwidth}\centering
		\includegraphics[width=\textwidth]{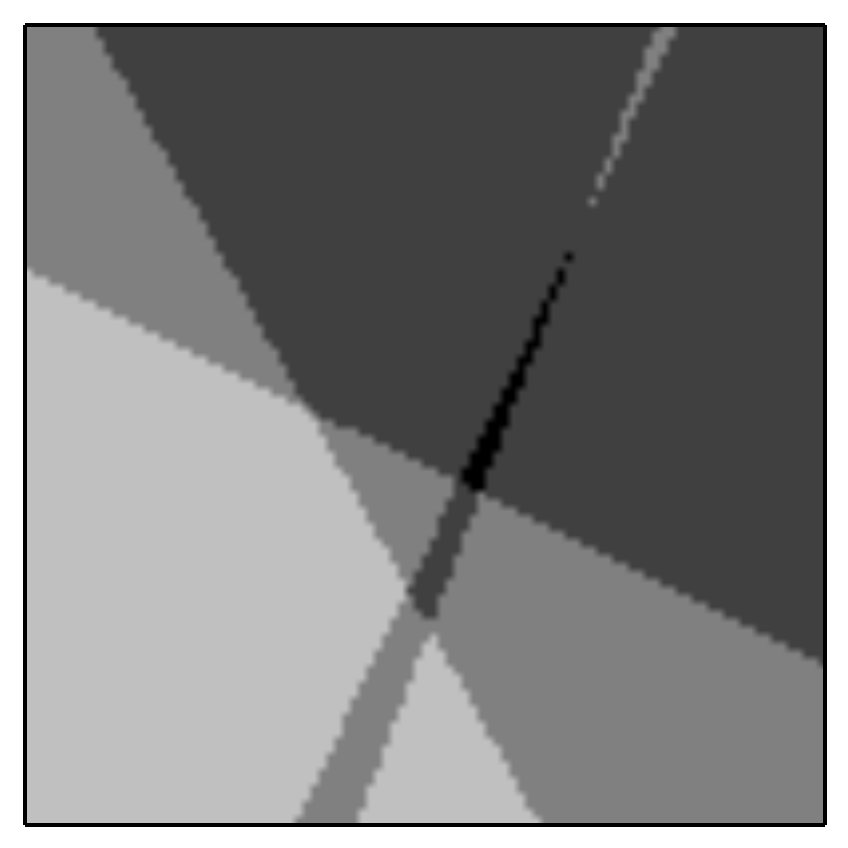}
		\caption{$p(z_{N+1})$} \label{subfig:stationary}
	\end{subfigure}
	\begin{subfigure}[t]{.19\textwidth}\centering
		\includegraphics[width=\textwidth]{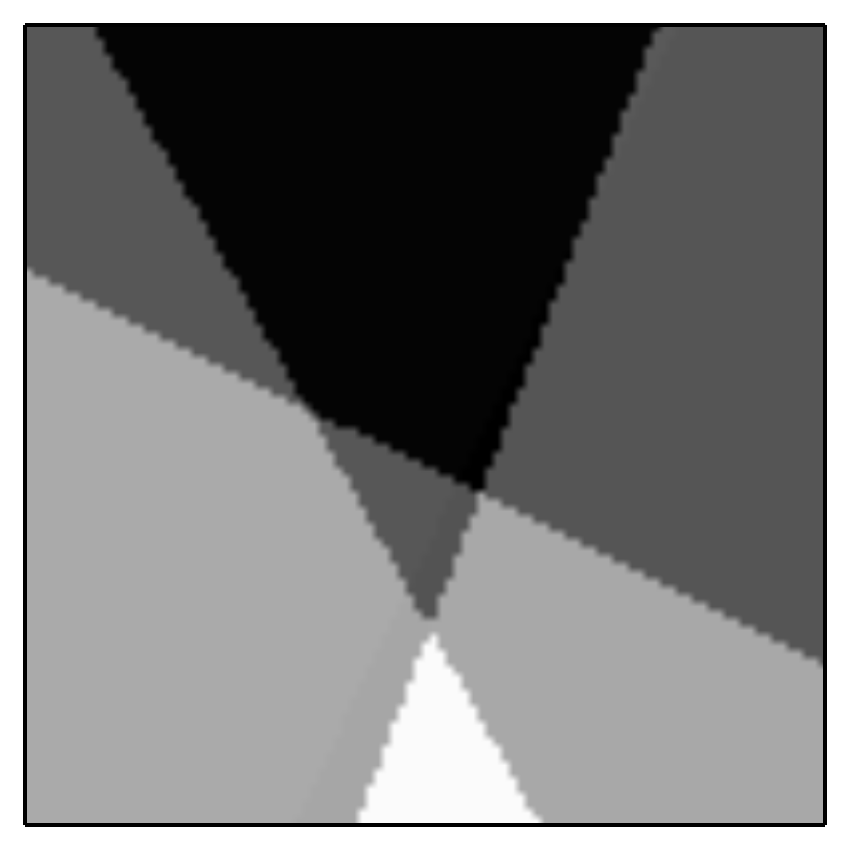}
		\caption{$p(z_{n+1}|z_{n-1}^{n})$}\label{subfig:conditional-h1} 
	\end{subfigure}
	\begin{subfigure}[t]{.19\textwidth}\centering
		\includegraphics[width=\textwidth]{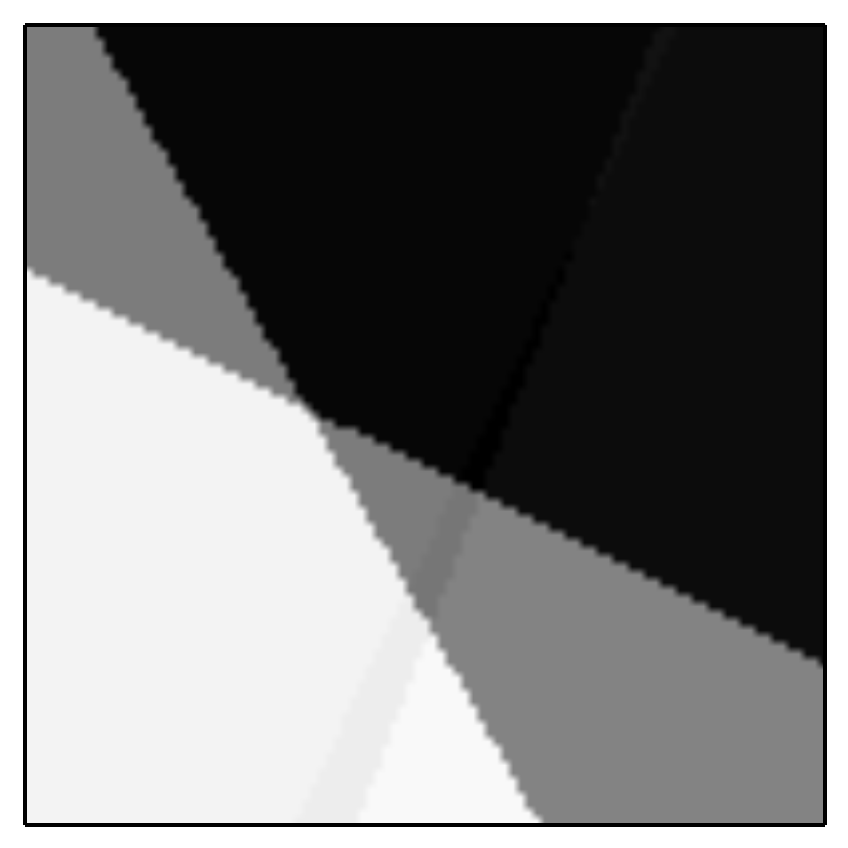}
		\caption{$p(z_{n+1}|z_{n-2}^{n})$}\label{subfig:conditional-h2} 
	\end{subfigure}
	\begin{subfigure}[t]{.19\textwidth}\centering
		\includegraphics[width=\textwidth]{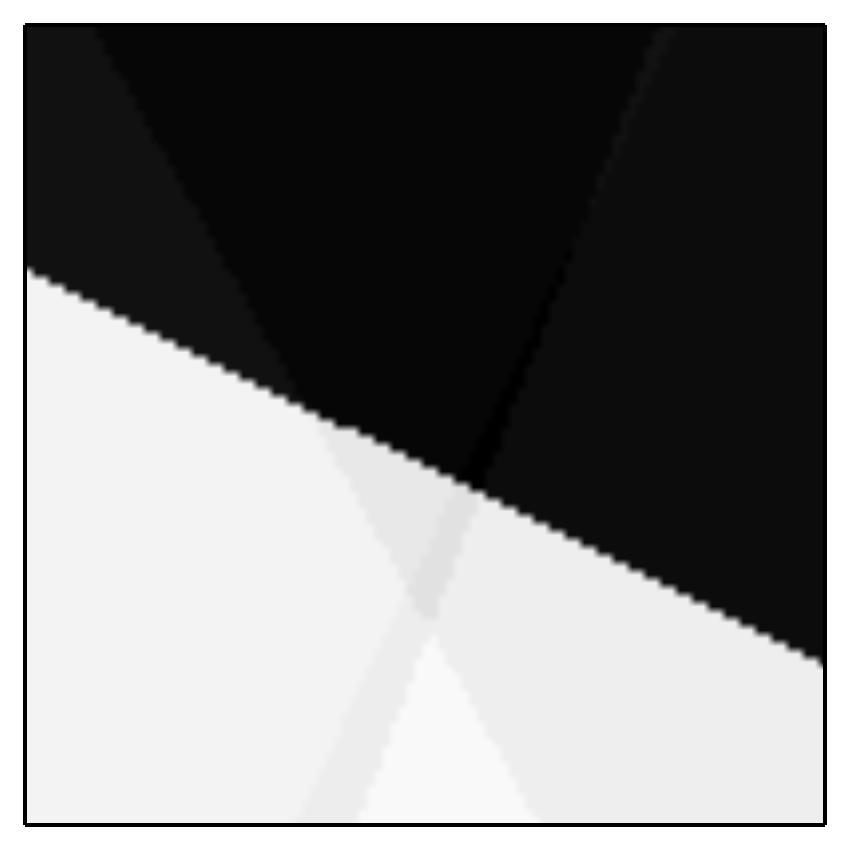}
		\caption{$p(z_{n+1}|z_{n-4}^{n})$}\label{subfig:conditional-h4}
	\end{subfigure}
	\begin{subfigure}[t]{.19\textwidth}\centering
		\includegraphics[width=\textwidth]{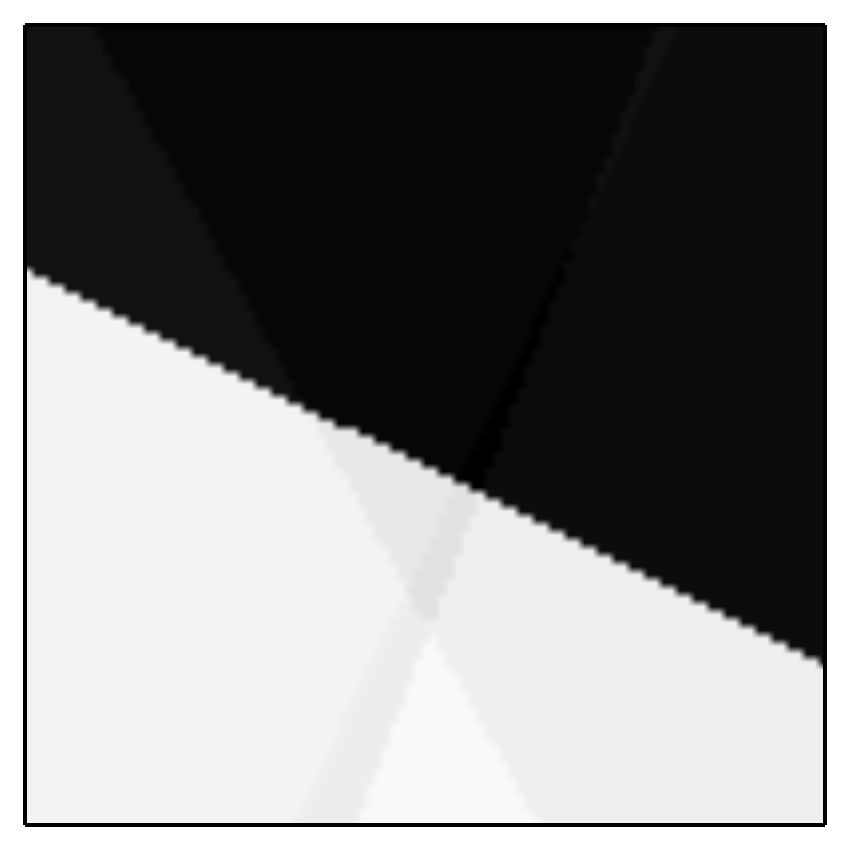}
		\caption{$p(z_{n+1}|z_{1}^{n})$}\label{subfig:conditional-full} 
	\end{subfigure}
    \caption{Illustration of the data distributions in Section~\ref{sec:experiments} by their $y$-expectations. (a): limit distribution of the hidden Markov chain.
    (b)--(d): conditional distributions of the $(n\!+\!1)$-th sample with history lengths 1, 2 and 4. (e):
     conditional distribution of the $(n\!+\!1)$-th sample with full history. }\label{fig:distributions}
\end{figure*}

We illustrate the proposed kernel-based estimator of the conditional risk,
for $d=4$ and a stratified set kernel, $\nK(S,\bar S)=\frac{1}{2|S_+||\bar S_+|}\sum_{(i,j)\in S_+\times\bar S_+}k(x_i,\bar x_j) + \frac{1}{2|S_-||\bar S_-|}\sum_{(i,j)\in S_-\times\bar S_-}k(x_i,\bar x_j)$,
for $S=\{(x_l,y_l)\}_{l=1}^d$ and $\bar S=\{(\bar x_l,\bar y_l\}_{l=1}^d$,
where $S_+/S_-$ are the sets of positive/negative examples in $S$ (and analogously for $\bar S$). 
As base kernel, $k(x,\bar x)$ we use the squared exponential kernel. 
Figure~\ref{subfig:weights} depicts the same samples, $z_1^{N}$, as Figure~\ref{subfig:process},
but each point $z_j$ is drawn at a size proportionally to its contribution to the conditional risk estimate 
at time $N=1001$ for $d=4$, \ie its kernel weight $\nK(z_{j-d}^{j-1}, z^{N}_{N+1-d})$,
The points of the history $z_{N-3}^{N}$ are marked with crosses. 
One can see that the resulting distribution indeed resembles the conditional 
distribution with history length 4 (Figure~\ref{subfig:conditional-h4}),
which is also close to the full conditional distribution (Figure~\ref{subfig:conditional-full}).

\begin{figure*}[t]\centering
\includegraphics[width=250px]{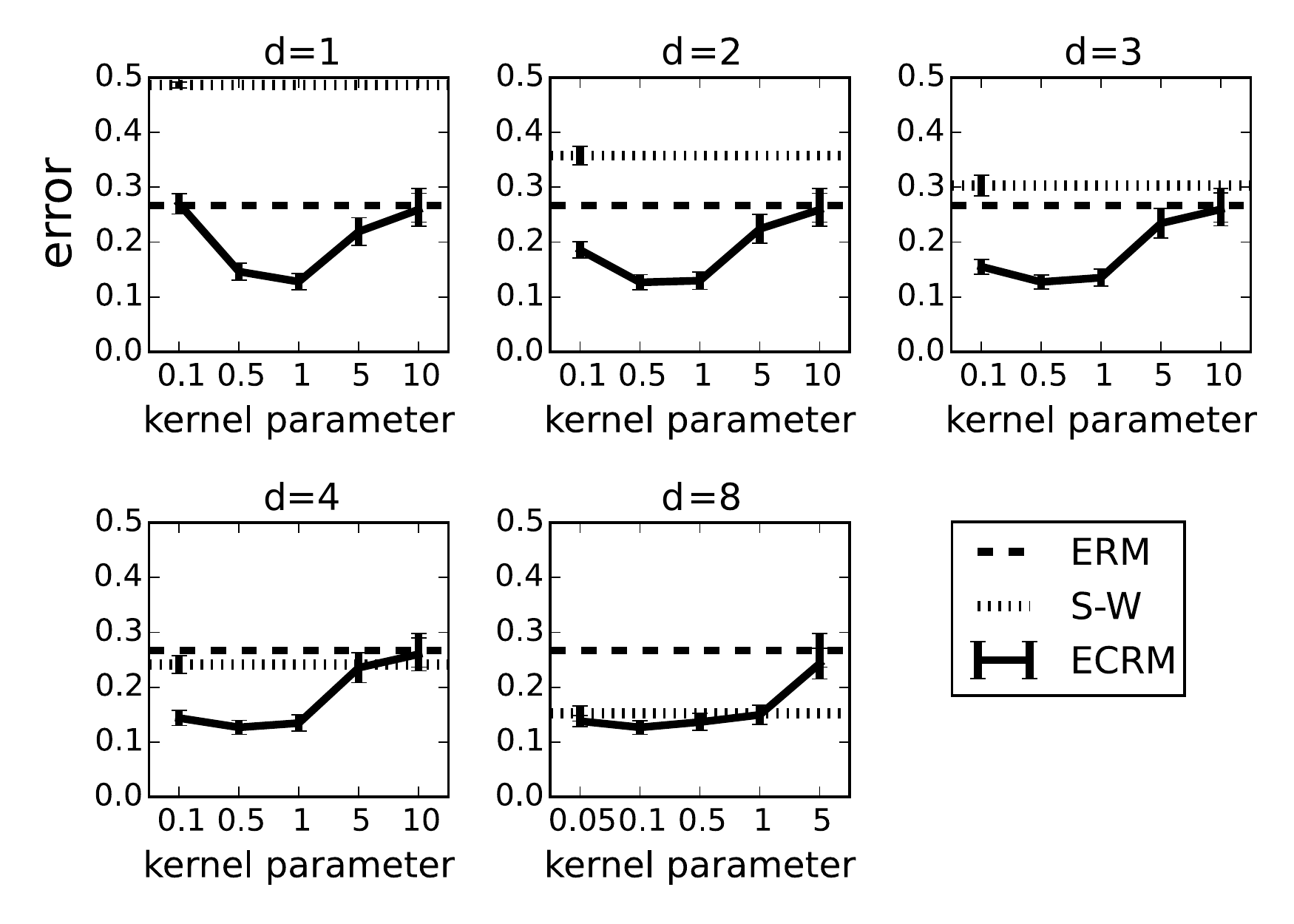}
\caption{Experimental results of empirical conditional risk minimization (ECRM) versus ordinary empirical risk minimization (ERM) and the sliding-window (S-W) heuristic.}
\label{fig:results}
\end{figure*}

Figure~\ref{fig:results} shows a numeric evaluation over $100$ randomly 
generated Markov chains. 
In each case, we learn a linear predictor by approximately minimizing 
the estimator of the conditional risk from $5000$ samples of the process. Then, 
we measure its quality using $5000$ samples from the conditional 
distribution, $p(z_{5001}|z_{1}^{5000})$. 
To learn the predictor, we use squared-loss as a convex surrogate for 
$0/1$-loss in the conditional risk estimator. The resulting expression
is convex in the weight vector of the predictor and can be minimized 
by solving a weighted least squares problem, where the weight of each 
sample is proportional to the kernel 
similarity between its history and the $d$ last observed elements of the sequence. 

The results show that learning by conditional risk minimization (solid line)
consistently outperforms empirical risk minimization (dashed line) for 
a wide range of kernel parameters (bandwidth of the Gaussian base kernel). 
Interestingly, even $d=1$ is often enough to achieve a noticeable improvement.
As an additional baseline, we add a sliding-window method that performs 
empirical risk minimization, but only on the samples in the history 
(dotted line). 
As expected, this heuristic works better than the plain ERM for long 
enough histories, but it does not achieve the same prediction quality 
as the conditional risk minimizer.

\section{Discussions}\label{sec:discussion}
As we noticed in the introduction, our definition of the conditional risk differs from the one usually considered in the literature because we condition on the last $d$ examples instead of to the whole sample.
The relation between two notion of the conditional risks was thoroughly discussed in \citep{caires2005non} and, in short, if one interested in the full history, then it is still possible to find a minimizer of this risk based on the risk with finite history for a wide range of processes called approximately Markov (for Markov processes the two notions coincide for the correct value of $d$). 
A stationary Gaussian process is an example of such process.

This relation also highlights a different problem - which value of $d$ to choose in practice.
This is a hard question for theory, since it requires to take into account computational costs: for bigger values of $d$ it may take more time to compute the weights. 
In addition, one may like to increase $d$ with the amount of data and while there are partial justifications of this approach \citep{Schafer01}, its consequences are unclear.
The current practical solution is to use cross-validation over feasible choices.

Another important parameter we need to choose is the bandwidth $b$. 
Its choice is a known problem for the kernel estimators themselves.
The usual solution is cross-validation, see \citep{Gyorfi05} for more details.
The proof of validity for such procedure in our setting is possible future work.

With all its benefits, the conditional risk minimization comes with the downside that it requires an intensive computation at each time step.
Usually, the amount of computations for a single kernel is linear in $d$, meaning that it takes $\mathcal{O}(dN)$ to compute the weights, and if $d$ is proportional to $N$, it may become quadratic.
Afterwards, one also has to perform the optimization itself.
While reducing the optimization costs is an algorithm-dependent problem, it may be possible to optimize computation of weights by some iterative procedure exploiting the fact that the finite history $\bz_{N-d+1}^{N}$ changes only by one point at each step.

Our approach inherits the main problem of nonparametric methods -- the curse of dimensionality. 
For high-dimensional spaces this makes it even harder to consider longer history because of the computational consideration above.
We believe that in practice this effect can be mitigated by an appropriate choice of the kernel function.
 
\section{Conclusion and possible extensions}
In this paper we introduced an empirical estimator of the conditional 
risk for vector-valued stochastic processes and we proved concentration 
bounds showing that the estimator converges uniformly to the true risk for large 
classes at an exponential rate, if the process is $\beta$-mixing
with sufficiently fast rates.

It is possible to generalize our results in several ways. 
The stationarity assumption is not essential in our proofs and can be relaxed to conditionally stationary processes from \citep{caires2005non}. In this case \lebesguecont\ and \lipschitzhist\ will have to be assumed for each time step.

The assumption of the support to be a hypercube was also made for convenience and it can easily be any compact set. 
It also can be relaxed by different means.
For example, we could make additional assumptions on the moments of $\numerna$ and $\denumna$ or restrict the supremum over histories to the set $\left\lbrace \bar{z}: \norm{\bar{z}} \leq c_n \right\rbrace$, with $c_n$ increasing as $n$ grows like it was done in \citep{Hansen01}. This would lead to a slower convergence rate, though. 
Another option is to assume that the distribution of $\bz_1^{d+1}$ is tight, meaning that for every $\varepsilon > 0$ there is a compact set $K_\varepsilon$ such that the probability mass assigned to $K_\varepsilon^c$ is less than $\varepsilon$. 
This approach would require the knowledge of behavior of the covering numbers of the sets $K_\varepsilon$.
While all these extension may allow to include more "real life" distributions, it is more of a technical contribution.

While $\beta$-mixing assumption covers a wide range of stochastic processes considered in the literature, there are other dependency measures that may be more suitable for concrete situations.
It should be possible to extend our results to these cases as long as a particular dependency measure allows for uniform convergence of the empirical averages.

%





\renewcommand{\refname}{\normalsize References}
\bibliographystyle{apalike}  
\bibliography{biblio}

\begin{thebibliography}{}

\bibitem[Adams and Nobel, 2010]{adams2010uniform}
Adams, T.~M. and Nobel, A.~B. (2010).
\newblock Uniform convergence of {V}apnik-{C}hervonenkis classes under ergodic
  sampling.
\newblock {\em Annals of Probability}, 38(4):1345--1367.

\bibitem[Agarwal and Duchi, 2013]{Agarwal01}
Agarwal, A. and Duchi, J.~C. (2013).
\newblock The generalization ability of online algorithms for dependent data.
\newblock {\em IEEE Transactions on Information Theory}, 59(1):573--587.

\bibitem[Alquier and Wintenberger, 2012]{alquier2012model}
Alquier, P. and Wintenberger, O.~O. (2012).
\newblock Model selection for weakly dependent time series forecasting.
\newblock {\em Bernoulli}, 18(3):883--913.

\bibitem[Athreya and Pantula, 1986]{athreya1986mixing}
Athreya, K.~B. and Pantula, S.~G. (1986).
\newblock Mixing properties of {H}arris chains and autoregressive processes.
\newblock {\em Journal of Applied Probability}, pages 880--892.

\bibitem[Berti and Rigo, 1997]{Berti01}
Berti, P. and Rigo, P. (1997).
\newblock A {G}livenko-{C}antelli theorem for exchangeable random variables.
\newblock {\em Statistics \& Probability Letters}, 32(4):385--391.

\bibitem[Biau et~al., 2010]{Biau01}
Biau, G., Bleakley, K., Gy{\"o}rfi, L., and Ottucs{\'a}k, G. (2010).
\newblock Nonparametric sequential prediction of time series.
\newblock {\em Journal of Nonparametric Statistics}, 22(3):297--317.

\bibitem[Bradley, 2005]{Bradley01}
Bradley, R.~C. (2005).
\newblock Basic properties of strong mixing conditions. {A} survey and some
  open questions.
\newblock {\em Probability Surveys}, 2:107--144.

\bibitem[Caires and Ferreira, 2005]{caires2005non}
Caires, S. and Ferreira, J. (2005).
\newblock On the non-parametric prediction of conditionally stationary
  sequences.
\newblock {\em Statistical inference for stochastic processes}, 8(2):151--184.

\bibitem[Chen et~al., 2010]{chen2010nonlinearity}
Chen, X., Hansen, L.~P., and Carrasco, M. (2010).
\newblock Nonlinearity and temporal dependence.
\newblock {\em Journal of Econometrics}, 155(2):155--169.

\bibitem[Collomb, 1984]{collomb1984proprietes}
Collomb, G. (1984).
\newblock Propri{\'e}t{\'e}s de convergence presque compl{\`e}te du
  pr{\'e}dicteur {\`a} noyau.
\newblock {\em Zeitschrift f{\"u}r Wahrscheinlichkeitstheorie und verwandte
  Gebiete}, 66(3):441--460.

\bibitem[Gamarnik, 2003]{gamarnik2003extension}
Gamarnik, D. (2003).
\newblock Extension of the {PAC} framework to finite and countable {M}arkov
  chains.
\newblock {\em IEEE Transactions on Information Theory}, 49(1):338--345.

\bibitem[Gy{\"o}rfi et~al., 1989]{Gyorfi05}
Gy{\"o}rfi, L., H{\"a}rdle, W., Sarda, P., and Vieu, P. (1989).
\newblock {\em Nonparametric curve estimation from time series}, volume~60.
\newblock Springer-Verlag Berlin.

\bibitem[Gy{\"o}rfi et~al., 2002]{Gyorfi01}
Gy{\"o}rfi, L., Krzyzak, A., Kohler, M., and Walk, H. (2002).
\newblock {\em A distribution-free theory of nonparametric regression}.
\newblock Springer.

\bibitem[Gy{\"o}rfi and Lugosi, 1992]{Gyorfi06}
Gy{\"o}rfi, L. and Lugosi, G. (1992).
\newblock Kernel density estimation from ergodic sample is not universally
  consistent.
\newblock {\em Computational statistics \& data analysis}, 14(4):437--442.

\bibitem[Hansen, 2008]{Hansen01}
Hansen, B.~E. (2008).
\newblock Uniform convergence rates for kernel estimation with dependent data.
\newblock {\em Econometric Theory}, 24(03):726--748.

\bibitem[Karandikar and Vidyasagar, 2002]{karandikar2002rates}
Karandikar, R.~L. and Vidyasagar, M. (2002).
\newblock Rates of uniform convergence of empirical means with mixing
  processes.
\newblock {\em Statistics \& Probability Letters}, 58(3):297--307.

\bibitem[Kuznetsov and Mohri, 2014]{Kuznetsov01}
Kuznetsov, V. and Mohri, M. (2014).
\newblock Generalization bounds for time series prediction with non-stationary
  processes.
\newblock In {\em Algorithmic Learning Theory (ALT)}, pages 260--274. Springer.

\bibitem[Kuznetsov and Mohri, 2015]{kuznetsov2015learning}
Kuznetsov, V. and Mohri, M. (2015).
\newblock Learning theory and algorithms for forecasting non-stationary time
  series.
\newblock In {\em Conference on Neural Information Processing Systems (NIPS)},
  pages 541--549.

\bibitem[Meir, 2000]{Meir01}
Meir, R. (2000).
\newblock Nonparametric time series prediction through adaptive model
  selection.
\newblock {\em Machine Learning}, 39(1):5--34.

\bibitem[Modha and Masry, 1996]{Modha01}
Modha, D.~S. and Masry, E. (1996).
\newblock Minimum complexity regression estimation with weakly dependent
  observations.
\newblock {\em IEEE Transactions on Information Theory}, 42(6):2133--2145.

\bibitem[Modha and Masry, 1998]{modha1998memory}
Modha, D.~S. and Masry, E. (1998).
\newblock Memory-universal prediction of stationary random processes.
\newblock {\em IEEE Transactions on Information Theory}, 44(1):117--133.

\bibitem[Mohri and Rostamizadeh, 2013]{Mohri02fixed}
Mohri, M. and Rostamizadeh, A. (10 Oct 2013).
\newblock Stability bounds for stationary $\varphi$-mixing and $\beta$-mixing
  processes.
\newblock http://www.cs.nyu.edu/\texttildelow{}mohri/pub/niidj.pdf.

\bibitem[Mohri and Rostamizadeh, 2009]{Mohri01}
Mohri, M. and Rostamizadeh, A. (2009).
\newblock Rademacher complexity bounds for non-iid processes.
\newblock In {\em Conference on Neural Information Processing Systems (NIPS)},
  pages 1097--1104.

\bibitem[Pestov, 2010]{Pestov2010}
Pestov, V. (2010).
\newblock Predictive {PAC} learnability: A paradigm for learning from
  exchangeable input data.
\newblock In {\em IEEE International Conference on Granular Computing (GrC)},
  pages 387--391.

\bibitem[Sch{\"a}fer, 2002]{Schafer01}
Sch{\"a}fer, D. (2002).
\newblock Strongly consistent online forecasting of centered gaussian
  processes.
\newblock {\em IEEE Transactions on Information Theory}, 48(3):791--799.

\bibitem[Shalizi and Kontorovitch, 2013]{Shalizi13}
Shalizi, C. and Kontorovitch, A. (2013).
\newblock Predictive {PAC} learning and process decompositions.
\newblock In {\em Conference on Neural Information Processing Systems (NIPS)},
  pages 1619--1627.

\bibitem[Steinwart and Christmann, 2009]{steinwart2009fast}
Steinwart, I. and Christmann, A. (2009).
\newblock Fast learning from non-iid observations.
\newblock In {\em Conference on Neural Information Processing Systems (NIPS)},
  pages 1768--1776.

\bibitem[Vapnik and Chervonenkis, 1971]{vapnik1971uniform}
Vapnik, V. and Chervonenkis, A. (1971).
\newblock On the uniform convergence of relative frequencies of events to their
  probabilities.
\newblock {\em Theory of Probability \& Its Applications}, 16(2):264--280.

\bibitem[Vapnik, 1998]{vapnik1998statistical}
Vapnik, V.~N. (1998).
\newblock {\em Statistical learning theory}, volume~1.
\newblock Wiley New York.

\bibitem[Wintenberger, 2014]{wintenberger2014optimal}
Wintenberger, O. (2014).
\newblock Optimal learning with {B}ernstein online aggregation.
\newblock {\em arXiv preprint arXiv:1404.1356}.

\bibitem[Yu, 1994]{YuBin01}
Yu, B. (1994).
\newblock Rates of convergence for empirical processes of stationary mixing
  sequences.
\newblock {\em Annals of Probability}, pages 94--116.

\bibitem[Zou et~al., 2009]{zou2009generalization}
Zou, B., Li, L., and Xu, Z. (2009).
\newblock The generalization performance of {ERM} algorithm with strongly
  mixing observations.
\newblock {\em Machine Learning}, 75(3):275--295.

\end{thebibliography}

\end{document}